\newcommand{\suchthat}{\;\ifnum\currentgrouptype=16 \middle\fi|\;}
\newcommand{\R}{\mathbb{R}}
\newcommand{\Reg}{\text{Reg}}
\newcommand{\hatf}{\hat{f}}
\newcommand{\F}{\mathcal{F}}
\newcommand{\E}{\mathop{\mathbb{E}}}
\newcommand{\A}{\mathcal{A}}
\newcommand{\Xscript}{\mathcal{X}}
\newcommand{\ordO}{\mathcal{O}}
\newcommand{\Unif}{\text{Unif}}
\newcommand{\I}{\mathbb{I}}
\newcommand{\G}{\mathcal{G}}
\newcommand{\comp}{\textbf{comp}}
\newcommand{\msafe}{m^*}
\newcommand{\msafealg}{\hat{m}}
\newcommand{\eventReg}{\mathcal{W}_1}
\newcommand{\eventEval}{\mathcal{W}_2}
\newcommand{\eventEvalRegret}{\mathcal{W}_3}
\newcommand{\eventEvalcheck}{\mathcal{W}_4}
\newcommand{\eventAzuma}{\mathcal{W}_5}
\newcommand{\TsafeCheck}{\mathcal{T}}
\newcommand{\N}{\mathbb{N}}
\newcommand{\safe}{\textbf{safe}}
\newcommand{\true}{\textbf{True}}
\newcommand{\SafeFalcon}{\text{Safe-FALCON}}
\newcommand{\IsSafe}{\text{Check-is-safe}}
\newcommand{\ChoosePi}{\text{Choose-safe}}
\newcommand{\CumulativeReward}{\text{Crwd}}
\newcommand{\kernelSet}{\mathcal{K}(\F)}
\newtheorem{assumption}{Assumption}
\newtheorem{lemma}{Lemma}
\icmltitlerunning{Adapting to Misspecification in Contextual Bandits}
\begin{document}

\twocolumn[
\icmltitle{Adapting to Misspecification in Contextual Bandits with Offline Regression Oracles}



\icmlsetsymbol{equal}{*}

\begin{icmlauthorlist}
\icmlauthor{Sanath Kumar Krishnamurthy}{to}
\icmlauthor{Vitor Hadad}{goo}
\icmlauthor{Susan Athey}{goo}
\end{icmlauthorlist}

\icmlaffiliation{to}{Management Science and Engineering, Stanford University, Stanford, CA, USA}
\icmlaffiliation{goo}{Graduate School of Business, Stanford University, Stanford, CA, USA}

\icmlcorrespondingauthor{Sanath Kumar Krishnamurthy}{sanathsk@stanford.edu}

\icmlkeywords{Contextual Bandits}

\vskip 0.3in
]



\printAffiliationsAndNotice{}  

\begin{abstract}
    Computationally efficient contextual bandits are often based on estimating a predictive model of rewards given contexts and arms using past data.  However, when the reward model is not well-specified, the bandit algorithm may incur unexpected regret, so recent work has focused on algorithms that are robust to misspecification. We propose a simple family of contextual bandit algorithms that adapt to misspecification error by reverting to a good safe policy when there is evidence that misspecification is causing a regret increase. Our algorithm requires only an offline regression oracle to ensure regret guarantees that gracefully degrade in terms of a measure of the average misspecification level. Compared to prior work, we attain similar regret guarantees, but we do no rely on a master algorithm, and do not require more robust oracles like online or constrained regression oracles  (e.g., \cite{foster2020adapting}; \cite{krishnamurthy2020tractable}). This allows us to design algorithms for more general function approximation classes.
\end{abstract}

\section{Introduction}
\label{sec:introduction}

Contextual bandit algorithms are a fundamental tool in sequential decision making and have been used in a variety of applications \citep[see e.g.,][Section 1.4 for a review]{lattimore2020bandit}. 

The finite-armed (stochastic) contextual bandit setting that this paper is concerned with can be described as follows. Over a sequence of rounds, a bandit algorithm receives some side information or ``contexts'', which is drawn from a fixed distribution. Upon receiving each context, the algorithm selects an action, and then receives a probabilistic reward whose distribution may depend on the context and action. The objective of the algorithm is to interactively learn a mapping from contexts to actions so as to maximize the rewards received during the experiment. In order to do so, it must efficiently trade off the need for resolving uncertainty about the value of each action (exploration), with the objective of maximizing rewards (exploitation).

Many contextual bandit algorithms make use of an estimate of the conditional mean function of rewards given contexts and arms, along with some measure of the uncertainty around this function. At a high level, if the model predicts that an action has high expected reward with low uncertainty, then the algorithm is more likely to select this action. This intuition leads to heuristics that are statistically optimal in some settings \citep[e.g.,][]{agrawal2013thompson, li2010contextual}. Such algorithms also tend to be computationally tractable relative to alternatives \citep{agarwal2014taming}, since all that is required is a predictive model and the ability to produce appropriate confidence intervals.

However, the success of such algorithms depends heavily on tenuous assumptions about the underlying data-generating process, as their performance guarantees often rely on the conditional mean function belonging to a particular class; e.g., that it be linear under some transformation of the contexts. This is often called ``realizability'' in the literature, and when violated can cause the algorithm to behave erratically. 

In this paper, we suggest an algorithm that adapts to model misspecification. To illustrate the problem, we consider an example described in \cite{krishnamurthy2020tractable} and use it to get insights into the behavior of a realizability-based algorithm \texttt{FALCON+} \citep{simchi2020bypassing} in a setting where realizability does not hold. Consider a two-arm contextual bandit setting:
\begin{equation}
    \label{eq:intro_example_model}
    \begin{aligned}
        \E[r_t \big| x_t = x, a_t = a] 
        =
        \begin{cases}
            \mathbb{I}\{ x_t > 0.5 \} &\text{if a = 1} \\
            0.5 &\text{if a = 2} \\
        \end{cases}
    \end{aligned}
\end{equation}
where $x_t \sim \text{Uniform}[0,1]$ represents the contexts observed at the beginning of the $t$-th round, $a_t$ is the action taken, $r_t$ is the reward, which is observed by the experimenter with noise $e_t \sim \mathcal{N}(0, 1)$. Although the conditional expectation of rewards \eqref{eq:intro_example_model} is clearly non-linear, the model is simple enough that one could expect a bandit algorithm based on a misspecified linear model to do well. 

Figure \ref{fig:falcon_failure} shows the behavior of average regret for a realizability-based algorithm \texttt{FALCON+} \citep{simchi2020bypassing}, under the (incorrect) assumption that the underlying model is linear. The details of this algorithm are not particularly important. It suffices to understand that at the end of approximately every $2^{m}$ rounds (we call these intervals ``epochs'' and index them by $m$), the algorithm computed an estimate $\hat{f}(x, a)$ of \eqref{eq:intro_example_model}, assuming a linear model and based on data from the previous epoch. Then, for every round in this epoch, it selects arms based on a probabilistic model where arms with high $\hat{f}(x, \cdot)$ have higher probability. A full description of this example is given in the appendix.

\begin{figure}[t]
    \centering
    \includegraphics[width=0.44\textwidth]{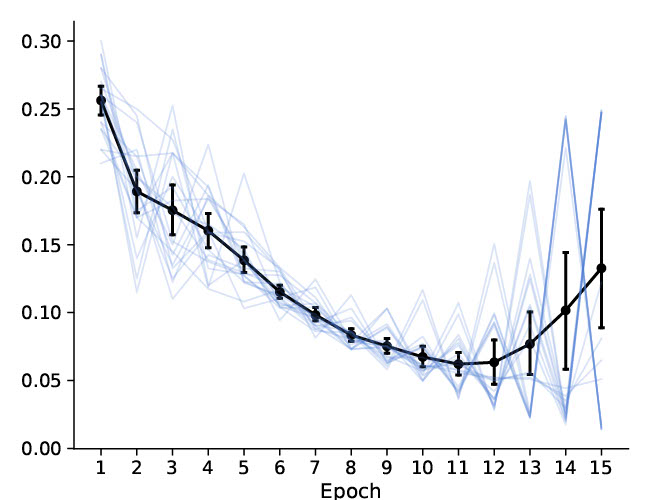}
    \caption{Illustration of the failure of a contextual bandit algorithm (\texttt{FALCON+}) when the model is not well-specified. Each epoch starts at round $2^m$. Vertical bars are 95\% confidence intervals around the average per-epoch average regret, aggregated over 50 simulations. Spaghetti plots are average per-epoch regret for 20 representative simulations.}
    \label{fig:falcon_failure}
\end{figure}

We notice two phenomena. First, the spaghetti plot reveals that average per-epoch reward has an oscillatory behavior, switching between very low and very high levels. This can be explained as follows. In some of the later epochs, the algorithm estimates a model that is close to the best linear approximation of \eqref{eq:intro_example_model}, and thus it almost exclusively selects actions optimally (i.e., $a = 1$ when $x_t > .5$, $a = 2$ otherwise). In such epochs average rewards are high. However, in doing so, it collects data that is so skewed that it adversely affects the model estimated in the next epoch, causing the algorithm to make many mistakes and driving average rewards down again. However, these mistakes in turn allow the algorithm to get less skewed data for the misspecified arm, leading to estimate a good linear approximation to \eqref{eq:intro_example_model} again, and the cycle repeats.

More importantly, as a consequence of the erratic behavior just described we observe a second phenomenon: although the average regret for decreases initially, it begins to increase again after some time. The fundamental reason for this failure is that, by ignoring misspecification, the algorithm fails to accurately capture the how much uncertainty there is about the true model, in particular in regions where there is heavy extrapolation. The resulting performance is clearly suboptimal, and in this case, the model estimates do not even seem to converge. This problem is not idiosyncratic to \texttt{FALCON+}. For example,  \cite{krishnamurthy2020tractable} shows that LinUCB \citep{li2010contextual} can converge to a suboptimal solution under the same example.

In this work, we propose a method to prevent the undesirable behavior described above. Our starting point is the \texttt{FALCON+} algorithm. We modify it by introducing an additional step in which we test for a dip in average rewards (i.e., an increase in regret) caused by model misspecification. Upon finding evidence of this issue, we revert to a previously estimated ``safe'' policy and reduce further model updates. When there is no model misspecification, we attain the optimal regret guarantees inherited from \texttt{FALCON+}. When there is model misspecification, our regret bounds have an additional term that depends on a measure of misspecification.


Our results bound the regret overhead due to misspecification by $\ordO(\epsilon\sqrt{K}T)$, where $\epsilon$ is the average misspecification error. Roughly speaking, we define the average misspecification error as a tight upper bound on the root mean squared difference between the true model and any function in the class of posited model that minimizes this squared difference (e.g., linear models). See \Cref{sec:Theory} for a formal definition. 



We now briefly describe how we get this result. Our starting point is \texttt{FALCON+}, which runs in epochs denoted by $m$. At each epoch $m$, the amount of exploitation is controlled by a parameter $\gamma_m^{-1}$. This parameter decreases with each epoch, as the algorithm learns about the environment and exploitation increases.
We observe that while $\gamma_m^{-1}$ is larger than the average misspecification $\epsilon$ -- which is true for the initial epochs -- 
cumulative regret decreases at the same rate as when realizability holds. When the average misspecification is comparable to this measure of exploration, we get that the expected instantaneous regret can be bounded by $\ordO(\epsilon\sqrt{K})$. Once $\gamma_m^{-1}$ becomes less than the average misspecification, the expected instantaneous regret may increase. In fact, this behaviour is observed in \Cref{fig:falcon_failure}. 

Our algorithm works by continuously testing for an unexpected change in cumulative rewards, and when that is detected the algorithm reverts to a good historically ``safe''. Our algorithm achieves the required regret bound because the expected instantaneous regret of this ``safe'' policy is bounded by $\ordO(\epsilon\sqrt{K})$ with high probability. Finally, to ensure that the regret guarantees in the initial epochs continue to hold with the high-probability parameter of our choice, our algorithm uses a $\gamma_m$ parameter that is about $\sqrt{2}$ times smaller compared to the one used in \texttt{FALCON+}.

Our algorithm is computationally efficient and flexible, as all that is needed is an offline regression oracle (i.e., the ability to fit a predictive model), thereby extending the reduction from contextual bandits to offline regression oracles \citep{simchi2020bypassing} to scenarios where realizability may not hold. Further our algorithm does not require knowledge of the average misspecification error, nor does it require the use of master algorithms. 

Not relying on master algorithms to adapt to unknown misspecification allows us enjoy additional computational and statistical benefits. The computational benefit comes from not requiring to maintain and update $\lfloor\log(T)\rfloor$ base bandit algorithms\footnote{These base algorithms make different guesses for the misspecification measure, and the master algorithm chooses the best performing base algorithm.}. We also inherit the optimal realizability-based regret bound from \texttt{FALCON+} when the assumption holds, and save an additional $\ordO(\sqrt{\log(T)})$ factor that would have been incurred had we relied on a master with $\lfloor\log(T)\rfloor$ base algorithms.

Finally, our bounds on regret overhead due to misspecification are in terms of the average misspecification error and match the best known bounds from prior work \citep{foster2020adapting}. 

Our upper bound results are summarized in \Cref{sec:main_results}. To see that these upper bounds are optimal for contextual bandit algorithms that are based on regression oracles, up to constant factors, we also obtain matching lower bounds for the regret overhead due to misspecification (see \Cref{sec:lower-bound}). These results can also be interpreted as a quantification of the bias-variance trade-off for contextual bandits with regression oracles, see \Cref{sec:main_results} for a more detailed discussion.

\subsection{Related work}
\label{sec:lit}


Over the last couple of decades, contextual bandit algorithms have been extensively studied \cite{lattimore2020bandit}. However, the performance of many algorithms rely on the ``realizability'' assumption, which requires the analyst to know the form of the conditionally expected reward model. Moreover, the theoretical analysis of these algorithms that bounded cumulative regret often break down even under mild violations of the assumption. Since the realizability assumption may not be realistic, bandit algorithms that are robust to model misspecification have become a subject of intense recent interest. 

In particular, recent works study algorithms that bound the regret overhead due to misspecification. When the true reward function is linear up to an additive error uniformly bounded by $\epsilon$ and contexts are $d$-dimensional, \cite{neu2020efficient} provide an algorithm that bound the regret overhead due to misspecification by $\ordO(\epsilon\sqrt{d}T)$. This paper assumes perfect knowledge of the covariance matrix for the distribution of contexts, but the algorithm does not require knowledge of $\epsilon$.

This result is improved\footnote{Assuming $K<d$, which is often true.} and generalized by \cite{foster2020beyond}. Given an online regression oracle for a class of value functions $\F$ and suppose the true reward function can be approximated by a function in $\F$ up to an additive error uniformly bounded by $\epsilon$, \cite{foster2020beyond} provide an algorithm that achieves a regret overhead bound of $\ordO(\epsilon\sqrt{K}T)$, where $K$ is the number of arms. The algorithms proposed in this paper uses $\epsilon$ as an input parameter.

In many scenarios, one would expect a uniform bound on the additive error to be rather stringent. Concurrent work bound the regret overhead in terms of the ``average misspecification error'' \cite{foster2020adapting, krishnamurthy2020tractable}, the notions of misspecification used in these papers are not the same but are similar. Roughly speaking, the average misspecification error is averaged over contexts but are uniformly bounded over policies. If $\epsilon$ is the average misspecification error, \cite{foster2020adapting} and \cite{krishnamurthy2020tractable} achieve regret overhead bounds of $\ordO(\epsilon\sqrt{K}T)$ and $\ordO(K^{2/5}\epsilon^{4/5}T)$. 
\footnote{These bounds are non-trivial only if $\epsilon\sqrt{K}<1$, clearly under this setting the bounds guaranteed by \cite{krishnamurthy2020tractable} are weaker. This is because, their algorithm performs uniform sampling for a fraction of the time-steps. }
The algorithms used in these papers assume access to robust regression oracles, like online regression oracles \citep{foster2020adapting} and offline constrained regression oracles \citep{krishnamurthy2020tractable}. Furthermore, \cite{krishnamurthy2020tractable} assume that the set $\F$ is convex and require knowledge of $\epsilon$ (up to a constant factor) as an input parameter. In contrast, \cite{foster2020adapting} can adapt to the unknown misspecification without requiring any information about the misspecification parameter ($\epsilon$). 

To start with, \cite{foster2020adapting} provide a base algorithm that requires knowledge of $\epsilon$ (up to a constant factor) to achieve the regret overhead bound of $\ordO(\epsilon\sqrt{K}T)$. They then consider $\lfloor\log(T)\rfloor$ base algorithms with different guesses for the average misspecification error ($\epsilon$). Finally, they show that a master algorithm can be used to select the best performing base algorithm while continuing to achieve an overall regret overhead bound of $\ordO(\epsilon\sqrt{K}T)$.

The idea of using master algorithms to adapt to unknown misspecification has also been used earlier in the context of misspecified linear bandits \cite{pacchiano2020model}. The misspecified linear bandit setup has also been studied in \cite{ghosh2017misspecified} and \cite{lattimore2020learning}. These papers bound regret overhead in terms of the uniform misspecification error.

Oracle-based agnostic contextual bandit algorithms do not assume realizability and hence directly adapt to unknown misspecification\footnote{Here misspecification would correspond to the optimal policy not lying in the set of policies being explored.} \citep{dudik2011efficient, agarwal2014taming}. Unfortunately, these approaches suffer from computational issues that limit their implementability. 

Within the broader literature of contextual bandits, we build on the recent line of work that provide reductions to offline/online squared loss regression \citep{foster2018practical, foster2020beyond, xu2020upper, foster2020instance}. In particular, our work can be viewed as an extension of the analysis of \cite{simchi2020bypassing} to general scenarios that do not assume realizability.

\section{Theory}
\label{sec:Theory}

To formalize the problem and discuss the properties of our algorithm, let's first establish some basic notation. Other symbols will be introduced later as appropriate.

\paragraph{Basic notation} We let $\A$ denote the finite set of actions, $K$ denote the number of arms (i.e. $K:=|\A|$), and $\Xscript$ denote the set of contexts.  We let the notation $[n]$ denote the set $\{1,...,n \}$. The (possibly unknown) number of rounds is denoted by $T$. Our algorithm will work in epochs indexed by $m$; the final round of each epoch is denoted by $\tau_m$. We let $m(t)$ denote the epoch containing round $t$ -- that is, $m(t):=\min \{m| t \leq \tau_m \}$.

At every time-step $t \in [T]$, the environment draws a context $x_t\in\Xscript$ and reward vector $\smash{r_t \in [0,1]^K}$ from a fixed but unknown distribution $D$. Unless stated otherwise, all expectations are with respect to this distribution. Using potential outcome notation, we let $r_t(a)$ denote the reward that associated with arm~$a$ at time $t$. 

We use $p$ to denote probability kernels from $\A\times\Xscript$ to $[0,1]$, and let $D(p)$ be the induced distribution over $\Xscript\times\A\times[0,1]$, where sampling $(x,a,r(a))\sim D(p)$ is equivalent to sampling $(x,r)\sim D$ and then sampling $a\sim p(\cdot|x)$.

The true conditional expectation function of rewards is denoted by $f^*:\Xscript \times \A \rightarrow [0,1]$; i.e. $f^*(x,a):=\E[r_t(a)|x_t=x]$. We also let $D_{\Xscript}$ denote the marginal distribution of $D$ on the set of contexts $\Xscript$. A model $f \in \F$ is any map from $\Xscript\times\A$ to $[0,1]$. With a slight abuse of notation, for any model $f\in\F$ and context $x\in\Xscript$, we let $f(x)$ denote the vector $(f(x,a))_{a\in\A}$ that lies in $[0,1]^K$.

By a policy we mean is a deterministic function from $\pi : \Xscript \to \A$. The policy that maximizes the conditional mean of rewards is denoted by $\pi^*(x)$; i.e., $\pi^{*}(x) = \arg\max_a f^*(x, a)$. We also let $\pi_f$ denote the policy induced by model $f$, which is given by $\pi_f(x) := \arg\max_a f(x, a)$ for every $x$. The goal of a contextual bandit algorithm is to bound cumulative regret:
\begin{equation}
    \label{eq:cum_regret}
    R_T := \sum_{t=1}^T [r_t(\pi^*(x_t))- r_t(a_t)].
\end{equation}

\paragraph{Misspecification} 
Since our algorithm does not require that the model be well-specified, the class $\F$ may not contain the true model $f^*$. We denote by $\sqrt{B}$ the \emph{average misspecification} error relative to $\F$,\footnote{Similar measures of misspecification are denoted by $\epsilon$ in other papers.} where
\begin{equation}
    \label{eq:capb}
    B :=\max_{p\in\kernelSet} \min_{f\in\F} \E_{x\sim D_{\Xscript}}\E_{a\sim p(\cdot|x)} [(f(x,a)-f^*(x,a))^2].
\end{equation}
Where $\kernelSet$ is the set of probability kernels induced by $\F$,
\begin{equation}
    \label{eq:action-kernel-set}
    \begin{aligned}
    &\kernelSet:=\bigg\{p:\A\times\Xscript\rightarrow [0,1] \text{ is a probability kernel} |\;\\ 
    &\exists f_p\in\F, \; \text{probability kernel } g_p:\A\times [0,1]^K \rightarrow [0,1], \;\\ 
    &\text{such that } \forall \; (x,a)\in\Xscript \times \A, \; p(a|x) = g_p(a| f_p(x)) \bigg\}.
    \end{aligned}
\end{equation}
The set $\kernelSet$ contains all probability kernels $p$ from $\A\times\Xscript$ to $[0,1]$ that can be represented by some pair $(f_p,g_p)$, where $f_p$ is a function in $\F$ and $g_p$ is a probability kernel from $\A\times [0,1]^K$ to $[0,1]$ such that $p(a|x)=g_p(a|f_p(x))$ for all actions $a$ and contexts $x$. That is, $\kernelSet$ considers only those probability kernels that depend on the context $x$ through some some model in $\F$. We discuss this in more detail in \Cref{sec:lower-bound}.

The average misspecification need not be known, though our regret bounds stated in Section \ref{sec:main_results} will depend on it.

\subsection{Regression Oracle}
\label{sec:regression-oracle}

We use a regression algorithm as a subroutine on the class of outcome models $\F$, and our exploration depends on the estimation rates of this subroutine. Suppose $\hatf$ is the output of the regression algorithm fitted on $n$ independently and identically drawn samples from $D(p)$, it is then reasonable to expect that for any $\zeta\in(0,1)$, the following holds with probability $1-\zeta$:
\begin{equation}
\label{eq:regression-rate}
\begin{aligned}
    &\E_{x\sim D_{\Xscript}}\E_{a\sim p(\cdot|x)}[ (\hatf(x, a) - f^*(x,a))^2 ]\\ 
    &\leq \min_{f\in\F} \E_{x\sim D_{\Xscript}}\E_{a\sim p(\cdot|x)}[(f(x,a)-f^*(x,a))^2] + \xi(n,\zeta).
\end{aligned}
\end{equation}
We call $\xi(\cdot,\cdot)$ the estimation rate of the regression algorithm and assume that it is known. We also require it to satisfy two benign conditions, and say it is a ``valid'' estimation rate if it satisfies these conditions.  First, we require $\xi$ to be a decreasing function of $n$. In particular, we require:\footnote{We require the first condition to ensure that $\gamma_m$ is a increasing function of $m$, see \Cref{sec:algorithm}.}
\begin{equation}
\label{eq:xi-first-condition}
    \begin{aligned}
        &\text{For all $\delta\in(0,1)$,}\\
        &\text{$\xi(n, \delta/\ln(n))$ is non-increasing in $n$.}
    \end{aligned}
\end{equation}
The second condition is that this estimation rate is lower bounded by the rate for estimating the mean of a one-dimensional bounded random variable:\footnote{The second condition is more for notational convenience as \eqref{eq:regression-rate} will always hold with a larger $\xi$. Further, in most scenarios, one would not expect a rate smaller than the one for estimating the mean of a one-dimensional bounded random variable. }
\begin{equation}
\label{eq:xi-second-condition}
    \begin{aligned}
        &\text{For all $\zeta\in(0,1)$ and $n\in\N$,}\\ 
        &\xi(n, \zeta) \geq \ln(1/\zeta)/n.
    \end{aligned}
\end{equation}
We restate these general requirements of the regression algorithm as \Cref{ass:main-assumption} in \Cref{sec:main_results}. Finally, for concreteness, note that any estimation rate of the following form is valid: 
\begin{equation}
\label{eq:common-rate}
    \xi(n,\zeta) =
    \begin{cases}
        \frac{C\ln^{\rho'}(n)\ln(1/\zeta)\comp(\F)}{n^{\rho}}, &\text{for $n\geq n_0$.} \\
        1. &\text{ otherwise.}
    \end{cases}
 \end{equation}
where $C>0$, $\rho\in (0,1]$, $\rho'\in [0,\infty)$, $\comp(\F)$ is an appropriate measure of the complexity of the outcome model class $\F$, and $n_0 \in \N$ is an appropriately chosen constant that ensures \eqref{eq:xi-first-condition} holds. Many statistical rates have this form \citep[see e.g.,][]{koltchinskii2011oracle}, indicating that our conditions on the regression algorithm are relatively benign.

\subsection{Algorithm}
\label{sec:algorithm}

In this section we outline the $\SafeFalcon$ algorithm. A formal description is deferred to Algorithm \ref{alg:safe-falcon} below.

As the name suggests, our method is based on the \texttt{FALCON+} algorithm in \citep[][Algorithm 2]{simchi2020bypassing}. \texttt{FALCON+} is computationally tractable and, when the model is well-specified (i.e., when $f^* \in \mathcal{F}$), attains optimal regret bounds on cumulative regret \eqref{eq:cum_regret}. However, as we saw in the introduction, under misspecification its behavior can be erratic. 

 $\SafeFalcon$ is implemented in epochs indexed by $m$. Where epoch $m$ starts at round $\tau_{m-1}+1$ and ends at round $\tau_m$, $\tau_{m+1}=2\tau_m$ for all $m\geq 1$, $\tau_0=0$, and $\tau_1\geq 2$ is an input to the algorithm. Each round $t$ starts with a status that is called ``safe'' or ``not safe'', depending on whether the algorithm has detected evidence of model misspecification using a test that will be described shortly. The algorithm's behavior depends on this status, and once it switches to ``not safe'' status, it never returns to ``safe''. Let's describe each of these behaviors.

\paragraph{Status-dependent behavior}  At the beginning of any epoch $m$ that starts on a ``safe'' round, the algorithm uses an estimate of the reward model $\hatf_m$, obtained using data from epoch $m-1$ from an offline regression oracle. As long as the ``safe'' status is maintained, at each round in this epoch it assigns actions by drawing from the following distribution, named the \emph{action selection kernel}: 
 \begin{align}
   \label{eq:action_kernel}
   p_m(a|x):=
   \begin{cases}
    \frac{1}{K+\gamma_m \left(\hatf_m(x, \hat{a}) - \hatf_m(x,a) \right)} &\text{for } a\neq \hat{a},\\
   1 - \sum_{a'\neq \hat{a}} p(a'|x) &\text{for } a=\hat{a}.
   \end{cases}
 \end{align}
where $\hat{a} = \max_a \hatf_m(a, x)$ is the predicted best action. The parameter $\gamma_m > 0$ governs how much the algorithm exploits and explores: assignment probabilities concentrate on the predicted best policy $\hat{a}$ when $\gamma_m$ is large, and are more spread out when $\gamma_m$ is small. During ``safe'' epochs, the speed at which $\gamma_m$ increases is inversely proportional to the square-root of the estimation rate of the regression algorithm:
\begin{align}
    \label{eq:gamma}
    \gamma_m := \sqrt{1/8} \sqrt{K/\xi(\tau_{m-1} - \tau_{m-2}, \delta'/m^2)}
\end{align}
where $\xi$ is the estimation rate of the regression oracle defined in \eqref{eq:regression-rate}, $\delta > 0$ is a confidence parameter, $\delta'=C\delta$ for a universal constant $C>0$, the quantity $\tau_{m-1} - \tau_{m-2}$ is the size of the previous batch, which was used to estimate the model $\hatf_m$. Definition \eqref{eq:gamma} implies that small classes such as linear models allow for a quickly increasing $\gamma_m$ and therefore more exploitation, while large classes require more exploration and therefore $\gamma_m$ increases more slowly. Finally, we let $S_m$ denote the data collected in epoch $m$, and note that $\hatf_{m+1}$ is the output of the regression oracle with $S_m$ as input.

Now suppose misspecification is detected at round $t$ and the algorithm enters ``not safe'' status. For all epochs $m< m(t)$, we compute a high-probability lower bound $l_m'$ around the expected reward of the policy that selects actions according to the kernel $p_m$, and select the action selection kernel associated with the epoch corresponding to the highest lower bound $\hat{m} = \arg\max_{m\leq m(t)} l_m'$, where
\begin{equation}
    l_{m}' := \frac{1}{|S_{m}|}\sum_{(x,a,r)\in S_{m}} r - \sqrt{\frac{1}{2|S_{m}|}\ln\bigg(\frac{m^2}{\delta'}\bigg)}.
\end{equation}
Thereafter, all actions will be selected according the the action selection kernel $p_{\hat{m}}(x|a)$.

\paragraph{Testing for misspecification} Denoting the beginning of the $m$-th epoch by $\tau_{m-1}+1$, the algorithm tests for misspecification at the end of round $\tau_{m-1} + 1$, $\tau_{m-1} + 2$, $\tau_{m-1} + 4$, and so on, up to and including $\tau_{m}$. Suppose round $t$ is one of these time-steps in epoch $m$ where the algorithm tests for misspecification. The test starts by constructing a loose high-probability lower bound on the expected reward of the optimal policy, $l_{m-1}=\max_{m'\leq m-1} l_{m'}'$.\footnote{By construction, $l'_m$ is a high probability lower bound on the expected reward of the randomized policy used in epoch $m$. Hence, $l_m=\max_{m'\leq m} l_{m'}'$ is a high probability lower bound on the expected reward of some (possibly randomized) policy. Therefore, $l_m$ is also a high probability lower bound on the expected reward of the optimal policy. } The test consists of checking whether cumulative rewards $\smash{\sum_{i=1}^t r_i(a_i)}$ remain above some lower bound $L_t$, defined as
\begin{equation}
    \label{eq:lower_bound}
    \begin{aligned}
     L_t := &t\cdot l_{m-1} - \tau_1 - \sqrt{2t\ln\Big(\frac{\lceil m+\log_2(\tau_1) \rceil^3}{\delta'}\Big)}\\ 
     & - 20.3\sqrt{K}\sum_{i=\tau_2}^{t} \sqrt{\xi\Big(\tau_{m(i)-1}-\tau_{m(i)-2}, \frac{\delta'}{(m(i))^2}\Big)}.
     \end{aligned}
\end{equation}
Once we detect that cumulative reward dip below this bound, the algorithm switches to ``not safe'' status forever. 

\begin{algorithm}[h]
  \caption{$\SafeFalcon$}
  \label{alg:safe-falcon}
  \textbf{input:} Initial epoch length $\tau_1\geq 2$, confidence parameter $\delta\in(0,1)$.
  \begin{algorithmic}[1] 
  \STATE Set $\tau_0 = 0$, and $\tau_{m+1} = 2\tau_m$ for all $m\geq 1$.
  \STATE Let $\hatf_1 \equiv 0$, $l_0 = 0$, $\CumulativeReward_0=0$ , $\safe=\true$, and $\msafealg = 0$.
  \FOR{epoch $m=1,2,\dots$}
    \STATE Let $\gamma_m$ be given by \eqref{eq:gamma}. (for epoch 1, $\gamma_1=1$.)\\
    \FOR{round $t=\tau_{m-1}+1,\dots, \tau_{m}$ }
        \STATE Observe context $x_t$.
        \IF{$\safe$}
            \STATE Let $p_m$ be given by \eqref{eq:action_kernel}.
            \STATE Sample $a_t \sim p_m(\cdot|x_t)$, and observe $r_t(a_t)$.
            \STATE Let $\CumulativeReward_t\leftarrow \CumulativeReward_{t-1}+r_t(a_t)$.
            \IF{$m\geq 2$} 
            \STATE $\safe \leftarrow \IsSafe(m, t, l_{m-1}, \CumulativeReward_t)$.
            \ENDIF
        \ELSE
            \STATE Sample $a_t\sim p_{\msafealg}(\cdot|x_t)$.
        \ENDIF
    \ENDFOR
    \IF{$\safe$}
        \STATE Let $S_m:=\{(x_t,a_t,r_t(a_t))\}_{t=\tau_{m-1}+1}^{\tau_m}$, be the data collected in epoch $m$.
        \STATE Let $(l_m,\msafealg)\leftarrow \ChoosePi(m, S_m, l_{m-1})$.
        \STATE Let $\hatf_{m+1}$ be the output of the regression algorithm with $S_{m}$ as input.
    \ENDIF
  \ENDFOR
  \end{algorithmic}
\end{algorithm}

\begin{algorithm}[h]
  \caption{$\IsSafe$}
  \label{alg:check-is-safe}
  \textbf{input:} Epoch $m$, time-step $t$, lower bound $l_{m-1}$, and $\CumulativeReward_t$.
  \begin{algorithmic}[1] 
  \STATE Let $L_t$ be given by \eqref{eq:lower_bound}.
  \IF{$\log_2(t-\tau_{m-1})\in\{0,1,2,\cdots\}$ or $t=\tau_m$}
    \IF{$\CumulativeReward_t\geq L_t$} 
    \STATE $\safe \leftarrow \true$.
    \ELSE
    \STATE $\safe \leftarrow \textbf{False}.$
    \ENDIF
  \ENDIF
  \STATE Return $\safe$.
  \end{algorithmic}
\end{algorithm}

\begin{algorithm}[h]
  \caption{$\ChoosePi$}
  \label{alg:choose-safe}
  \textbf{input:} Epoch $m$, lower bound $l_{m-1}$, and data collected in the $m$-th epoch $S_m$.
  \begin{algorithmic}[1] 
  \STATE Let
  \begin{align*}
      l_m' = \frac{1}{|S_{m}|}\sum_{(x,a,r)\in S_{m}} r \; - \sqrt{\frac{1}{2|S_{m}|}\ln\bigg(\frac{m^2}{\delta}\bigg)}.
  \end{align*}
  \STATE Let $l_m = \max(l_{m-1},l_m')$.
  \IF{$l_m\neq l_{m-1}$}
    \STATE Update $\msafealg\leftarrow m$.
  \ENDIF
  \STATE Return $(l_m,\msafealg)$.
  \end{algorithmic}
\end{algorithm}

\subsection{Understanding Safe-FALCON}
\label{sec:understanding-falcon}

In this section we try to understand $\SafeFalcon$ and simultaneously sketch a proof for our main cumulative regret bound (\Cref{thm:main-theorem}). \Cref{thm:main-theorem} provides the following bound on cumulative regret: 
\begin{align}
\label{eq:main-result-copy}
\begin{split}
   R_T \leq \ordO\Bigg(  &\sqrt{KB}\;T \\ 
   + \sqrt{K}&\sum_{t=\tau_1+1}^{T} \sqrt{\xi\Big(\tau_{m(t)-1}-\tau_{m(t)-2}, \frac{\delta}{(m(t))^2}\Big)} \Bigg). 
\end{split}
\end{align}
The first term in \eqref{eq:main-result-copy} is the regret overhead due to misspecification, and the second term is the regret bound for \texttt{FALCON+} assuming realizability holds. We also briefly note that all expectations in this section are taken over the randomness in the environment and algorithm being used.

To understand the intuition behind the algorithm, consider the following epoch, 
\begin{equation}
\label{eq:msafe}
    \msafe := \max \Bigg\{ m \suchthat B \leq \xi\Big(\tau_m-\tau_{m-1}, \frac{\delta'}{m^2} \Big) \Bigg\}.
\end{equation}
We show that, with high probability, the status at the end of epoch $\msafe+1$ is ``safe''. Moreover, up to the end of epoch $\msafe+1$, our upper bound on the expected instantaneous regret decreases at the same rate as when realizability holds. The proof of this fact follows by making a ``simple'' observation that allows us to extend the  analysis of \cite{simchi2020bypassing} to bound cumulative regret in these early epochs. In particular, if $m(t)\leq \msafe+1$, we get that with high probability the expected cumulative regret up to time $t$ is upper bounded by:
\begin{equation}
\label{eq:expected-cum-regret-upper bound}
    \begin{aligned}
    &\E\bigg[\sum_{i=1}^{t} (r_i(\pi^*(x_i))- r_i(a_i))\bigg]\\ 
    &\leq \tau_1 + 20.3\sqrt{K}\sum_{i=\tau_2}^{t} \sqrt{\xi\Big(\tau_{m(i)-1}-\tau_{m(i)-2}, \frac{\delta'}{(m(i))^2}\Big)},
    \end{aligned}
\end{equation}
which are the bounded attained by \texttt{FALCON+} under realizability. After epoch $\msafe+1$, the expected instantaneous regret may increase. However, we show that the lower bound we construct for the expected reward of the policy that selects according to the action selection kernel $p_{\msafe+1}$ is sufficiently close to the expected reward of the optimal policy:
\begin{equation}
\label{eq:bound-l-prime}
    \E[r_t(\pi^*(x_t))] - l_{\msafe+1}' \leq  \ordO(\sqrt{KB}).
\end{equation}
Hence, if we knew $\msafe$, by switching the algorithm's status to ``not safe'' at the end of epoch $\msafe+1$ would give us the required bounds on cumulative regret (\Cref{thm:main-theorem}). Unfortunately,
we do not know the value of $\msafe$. So, we try to detect if our current epoch $m$ is larger than $\msafe+1$ by looking for unexpected jumps in cumulative regret.\footnote{Note that in the realizable case, $\msafe$ is $\infty$, therefore trying to detect if $m>\msafe+1$ can also be considered as a test for misspecification as it is also testing the finiteness of $\msafe$.} 
Recall that from the construction of $l_{m-1}$ described in \Cref{sec:algorithm}, we have have that, $l_{m-1}$ is a weak high-probability lower bound on the expected reward of the optimal policy ($\E[r_t(\pi^*(x_t))]$). Therefore, from \eqref{eq:expected-cum-regret-upper bound} we get that when $m(t)\leq \msafe+1$, the expected cumulative reward up to time $t$ should be lower bounded by: 
\begin{equation}
\label{eq:expected-cum-reward-lower-bound}
    \begin{aligned}
     &\E\bigg[\sum_{i=1}^t r_i(a_i)\bigg] \geq t\cdot l_{m-1} - \tau_1 \\ 
     &- 20.3\sqrt{K}\sum_{i=\tau_2}^{t} \sqrt{\xi\Big(\tau_{m(i)-1}-\tau_{m(i)-2}, \frac{\delta'}{(m(i))^2}\Big)}.
    \end{aligned}
\end{equation}
Now, from standard concentration arguments and \eqref{eq:expected-cum-reward-lower-bound}, we get that with high probability, the cumulative reward up to time $t$ must be lower bounded by $L_t$ if $m(t)\leq \msafe+1$. That is, with high-probability, our test claims that $m(t)>\msafe+1$ only if it is true. This completes the proof sketch for the validity of the misspecification test. Hence, by design, we get that the status at the end of epoch $\msafe+1$ is safe with high probability.

Finally consider the case when $m(t)>\msafe+1$, but the misspecification test was not violated. That is $m(t)>\msafe+1$, but the cumulative reward up to time $t$ is lower bounded by $L_t$. By our algorithm design, since $m(t)>\msafe+1$, we get that: 
\begin{equation}
    \label{eq:order-l}
    l_{m(t)-1} \geq l_{\msafe+1} \geq l'_{\msafe+1}.
\end{equation}
Combining \eqref{eq:bound-l-prime} and \eqref{eq:order-l}, gives us that with high-probability, the lower bound $l_{m(t)-1}$ is close to the expected reward of the optimal policy:
\begin{equation}
    \label{eq:bound-lmt-1}
    \E[r_t(\pi^*(x_t))] - l_{m(t)-1} \leq  \ordO(\sqrt{KB}).
\end{equation}
Combining \eqref{eq:bound-lmt-1} and the fact that cumulative reward up to time $t$ is lower bounded by $L_t$, we get the required bound on cumulative regret \eqref{eq:main-result-copy}.\footnote{We use \eqref{eq:bound-lmt-1} to lower bound $L_t$ in terms of the expected optimal reward. We then use standard concentration inequalities to further lower bound this in terms of the cumulative reward of the optimal policy. Since $L_t$ itself is a lower bound on the cumulative reward up to time $t$, we get the required bound on cumulative regret \eqref{eq:main-result-copy}. } 

As we argued earlier, with high-probability, the algorithm's status switch only happens after epoch $\msafe+1$. From \eqref{eq:bound-l-prime}, we get that the instantaneous regret after the status switch is sufficiently small to give us the required bound on the cumulative regret \eqref{eq:main-result-copy}. This completes the proof sketch for \Cref{thm:main-theorem} and also explains our algorithmic choices in $\SafeFalcon$.

\subsection{Main result}
\label{sec:main_results}

The performance of our algorithm will depend on known estimation rates of the regression algorithm. As discussed in \Cref{sec:regression-oracle}, we require the regression algorithm used in $\SafeFalcon$ to satisfy \Cref{ass:main-assumption} described below.

\begin{assumption}
\label{ass:main-assumption}
Suppose that the regression algorithm used on the class of outcome model $\F$ satisfies the following property. For any probability kernel $p\in \kernelSet$, any natural number $n$, and any $\zeta \in (0, 1)$, the following holds with probability at least $1-\zeta$:
\begin{equation}
\label{eq:assumption}
    \E_{x\sim D_{\Xscript}}\E_{a\sim p(\cdot|x)}[ (\hatf(x, a) - f^*(x,a))^2 ] \leq B + \xi(n,\zeta).
\end{equation}
and where $\hatf$ is the output of the regression algorithm fitted on $n$ independently and identically drawn samples from $D(p)$ as input. Here $B>0$ is a (possibly unknown) constant. The function $\xi:\N\times[0,1]\rightarrow [0,\infty)$ is a known, ``valid'' rate; i.e., it satisfies \eqref{eq:xi-first-condition} and \eqref{eq:xi-second-condition}.\footnote{For regression algorithms that satisfy \eqref{eq:regression-rate}, we get that the constant $B$ used in \cref{ass:main-assumption} is given by \eqref{eq:capb}.}
\end{assumption}

\begin{restatable}[Main result]{theorem}{thmMain}
\label{thm:main-theorem} Suppose the regression algorithm used in $\SafeFalcon$ satisfies \Cref{ass:main-assumption}. Then with probability at least $1 - \delta$, $\SafeFalcon$ attains the following regret guarantee:
\begin{align}
\label{eq:main-theorem}
\begin{split}
   R_T \leq \ordO\Bigg(  &\sqrt{KB}\;T \\ 
   + \sqrt{K}&\sum_{t=\tau_1+1}^{T} \sqrt{\xi\Big(\tau_{m(t)-1}-\tau_{m(t)-2}, \frac{\delta}{(m(t))^2}\Big)} \Bigg). 
\end{split}
\end{align}
\end{restatable}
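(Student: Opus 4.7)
The plan is to turn the outline in \Cref{sec:understanding-falcon} into a rigorous argument by defining a small collection of high-probability events, each holding with probability $1-\delta/5$, whose intersection has probability at least $1-\delta$. The events I would introduce are: $\eventReg$, that the regression-oracle guarantee of \Cref{ass:main-assumption} holds for every epoch $m$ (with failure probabilities $\delta'/m^2$ summing over $m$ to at most $\delta/5$); $\eventEval$, that for every $m$ the Hoeffding-based quantity $l_m'$ really lower-bounds the expected reward of the policy induced by $p_m$; $\eventEvalRegret$, an Azuma bound controlling the cumulative reward of the optimal policy by $T\cdot\E[r_t(\pi^*(x_t))]$ up to $\ordO(\sqrt{T\ln(1/\delta)})$; and $\eventAzuma$, an Azuma-type uniform-in-time bound relating $\CumulativeReward_t$ to its conditional expectation, which must survive the data-dependent stopping time at which the safety status switches. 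Everything below is conditioned on the intersection of these events.

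The two central building blocks are then: (i) the per-epoch expected-regret bound for $\SafeFalcon$ while still \safe{} that matches the realizable FALCON+ bound, i.e.\ inequality~\eqref{eq:expected-cum-regret-upper bound}, and (ii) the estimate $\E[r_t(\pi^*(x_t))] - l'_{\msafe+1} \leq \ordO(\sqrt{KB})$, i.e.\ \eqref{eq:bound-l-prime}. For (i), I would extend the Simchi-Levi inverse-gap-weighted (IGW) analysis. The only place misspecification enters is through \Cref{ass:main-assumption}, which on $\eventReg$ replaces the usual realizable squared-error bound by $B+\xi$. The definition of $\msafe$ in \eqref{eq:msafe} exactly forces $B \leq \xi(\tau_{m-1}-\tau_{m-2},\delta'/m^2)$ for every $m \leq \msafe+1$, so $B+\xi \leq 2\xi$ and the IGW potential argument carries through with only a constant blow-up; the tuning of $\gamma_m$ in \eqref{eq:gamma} --- in particular the $\sqrt{1/8}$ factor --- is what absorbs this constant together with the slightly looser high-probability constants arising from the union bound. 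Claim (ii) then follows by combining $\eventEval$, which gives $l'_{\msafe+1} \geq \E[\text{reward of }p_{\msafe+1}] - \ordO(1/\sqrt{\tau_{\msafe+1}-\tau_{\msafe}})$, with (i) applied at epoch $\msafe+1$, where $\xi \leq B$ forces the per-round expected regret of $p_{\msafe+1}$ to be $\ordO(\sqrt{KB})$.

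Given (i) and (ii), the safety test can then be analyzed in both directions. For $m(t) \leq \msafe+1$, substituting bound (i) into $\eventEvalRegret$ and $\eventAzuma$ yields $\CumulativeReward_t \geq L_t$ deterministically on our events, so the status remains \safe{} through the end of epoch $\msafe+1$; by monotonicity of $l_m$ in $\ChoosePi$, this implies $l_{m(t)-1} \geq l'_{\msafe+1}$ for every subsequent round. For $m(t) > \msafe+1$ there are two cases: either the test never fires, in which case $\CumulativeReward_t \geq L_t$ by design of \IsSafe{}, and plugging (ii) into the $l_{m-1}$ term of $L_t$ and subtracting $\sum r_i(\pi^*(x_i))$ (controlled by $\eventEvalRegret$) yields the target bound \eqref{eq:main-theorem} on $R_T$; or the test fires, in which case $\SafeFalcon$ commits to $p_{\msafealg}$ for some $\msafealg \geq \msafe+1$, and \ChoosePi{} together with (ii) guarantees that the expected instantaneous regret of $p_{\msafealg}$ is $\ordO(\sqrt{KB})$, so $\eventAzuma$ summed over the remaining rounds contributes at most $\ordO(\sqrt{KB}\,T)$.

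The main obstacle I foresee is step (i): carrying the Simchi-Levi IGW potential-function analysis through under \Cref{ass:main-assumption} requires verifying that the kernels $p_m$ defined by \eqref{eq:action_kernel} always lie in $\kernelSet$ (they do, since they depend on $x$ only through $\hatf_m(x,\cdot)\in\F$), and tracking the constants in the $B+\xi$ bound carefully enough that the factor-of-two slack from $B+\xi \leq 2\xi$ in epochs $m\leq\msafe+1$ is fully absorbed by the adjusted $\gamma_m$. A secondary source of overhead is the union bound over the $\ordO(\log T)$ misspecification checks per epoch in \IsSafe, which costs only an additive $\ordO(\log\log T)$ factor inside the square-root and is already accounted for by the cubic $\lceil m+\log_2\tau_1\rceil^3/\delta'$ term appearing inside $L_t$ in \eqref{eq:lower_bound}.
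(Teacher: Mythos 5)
Your proposal follows essentially the same route as the paper: the same high-probability events ($\eventReg$, $\eventEval$, $\eventEvalRegret$, $\eventAzuma$), the same two building blocks (the realizability-rate regret bound through epoch $\msafe+1$ via the modified $\gamma_m$ and the definition of $\msafe$, and the $\ordO(\sqrt{KB})$ gap between $l'_{\msafe+1}$ and the optimal reward), and the same case split on whether the test fires before or after epoch $\msafe+1$. The one detail you gloss over is that $\IsSafe$ only evaluates the condition at geometrically spaced check points within each epoch, so ``$\CumulativeReward_t \geq L_t$ by design'' holds only at those points and the paper needs an extra event ($\eventEvalcheck$) plus a doubling argument to control the regret between the last verified check and the next scheduled one --- a routine fix, not a flaw in the approach.
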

The above regret typically has the same rate as $\ordO(\sqrt{K\xi(T,\delta/\log(T))}T + \sqrt{KB}T)$. In particular, when the estimation rates in \cref{ass:main-assumption} are of the form \eqref{eq:common-rate}, we get the regret bound given by \Cref{cor:common-rate}.

\begin{restatable}[]{corollary}{corCommonrate}
\label{cor:common-rate} 
Suppose the regression algorithm used in $\SafeFalcon$ satisfies \Cref{ass:main-assumption} with estimation rate of the form given by \eqref{eq:common-rate}. Then with probability at least $1 - \delta$, $\SafeFalcon$ attains the following regret guarantee:
\begin{align}
\label{eq:cor-common-rate}
\begin{split}
   R_T \leq \ordO\Bigg( & \sqrt{KB}\;T\\ 
   + &\sqrt{KT^{2-\rho}\ln^{\rho'}(T)\ln\Big(\frac{\ln(T)}{\delta}\Big)\comp(\F)} \Bigg). 
\end{split}
\end{align}
\end{restatable}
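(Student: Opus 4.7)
The plan is to apply \Cref{thm:main-theorem} directly and then resolve the resulting epoch-indexed sum using the concrete rate \eqref{eq:common-rate}. The first term $\sqrt{KB}\,T$ of \eqref{eq:main-theorem} already matches the first term of \eqref{eq:cor-common-rate}, so all the work concentrates on bounding $\sqrt{K}\sum_{t=\tau_1+1}^T \sqrt{\xi(\tau_{m(t)-1} - \tau_{m(t)-2},\, \delta/(m(t))^2)}$. I would first reorganize this sum by epoch: since $m(t)$ is constant within epoch $m$ and that epoch contains $\tau_m - \tau_{m-1}$ rounds, the sum equals
\begin{equation*}
\sum_{m=2}^{M(T)} (\tau_m - \tau_{m-1}) \sqrt{\xi\!\left(\tau_{m-1} - \tau_{m-2},\, \tfrac{\delta}{m^2}\right)},
\end{equation*}
where $M(T) \leq \lceil \log_2(T/\tau_1) \rceil + 1 = \ordO(\log T)$. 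The finitely many initial epochs with $\tau_{m-1} - \tau_{m-2} < n_0$ contribute an additive $\ordO(n_0)$ which is absorbed into the hidden constant.

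For the remaining epochs I substitute \eqref{eq:common-rate}, giving each summand as $(\tau_m - \tau_{m-1})\sqrt{C\ln^{\rho'}(\tau_{m-1}-\tau_{m-2})\ln(m^2/\delta)\comp(\F)/(\tau_{m-1}-\tau_{m-2})^{\rho}}$. The doubling schedule $\tau_{m+1} = 2\tau_m$ yields $\tau_m - \tau_{m-1} = 2(\tau_{m-1} - \tau_{m-2})$ for $m\geq 3$, so up to a universal constant each summand equals $(\tau_{m-1} - \tau_{m-2})^{1-\rho/2}\sqrt{\ln^{\rho'}(T)\ln(m^2/\delta)\comp(\F)}$. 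Pulling the monotone log factors out at their largest value (namely $T$ and $M(T)$), the surviving sum $\sum_m (\tau_{m-1} - \tau_{m-2})^{1-\rho/2}$ is geometric with ratio $2^{1-\rho/2} > 1$ (since $\rho \in (0,1]$), hence dominated by its final term, which is of order $T^{1-\rho/2}$.

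Combining the pieces, the epoch sum is $\ordO\!\left(T^{1-\rho/2}\sqrt{\ln^{\rho'}(T)\ln(\ln(T)/\delta)\comp(\F)}\right)$, where I used $M(T) = \ordO(\log T)$ so that $\ln(m^2/\delta) = \ordO(\ln(\ln(T)/\delta))$ uniformly over $m \leq M(T)$. Multiplying through by $\sqrt{K}$ and folding $T^{1-\rho/2}$ under the square root via $T^{1-\rho/2}\sqrt{X} = \sqrt{T^{2-\rho} X}$ produces exactly the second term of \eqref{eq:cor-common-rate}. The main obstacles are purely bookkeeping: verifying that the sum genuinely collapses to its last term uniformly for $\rho \in (0,1]$ (which holds because $1 - \rho/2 \geq 1/2$ keeps the geometric ratio bounded away from $1$ from above), handling the $\ln^{\rho'}(\cdot)$ and $\ln(m^2/\delta)$ factors at the correct scales, and absorbing the initial $\ordO(n_0)$ contribution. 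None of these introduce any genuine technical difficulty beyond careful constant tracking.
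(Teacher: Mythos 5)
Your derivation is correct and is exactly the computation the paper leaves implicit when it states \Cref{cor:common-rate} as a direct consequence of \Cref{thm:main-theorem}: group the sum by epoch, substitute \eqref{eq:common-rate}, and use the doubling schedule so that the resulting geometric series with ratio $2^{1-\rho/2}\geq\sqrt{2}$ is dominated (uniformly over $\rho\in(0,1]$) by its last term of order $T^{1-\rho/2}$, with the log factors bounded at scales $T$ and $M(T)=\ordO(\log T)$. No gaps; the handling of the truncated last epoch, the $n<n_0$ epochs, and the $\ln(m^2/\delta)=\ordO(\ln(\ln(T)/\delta))$ bound are all as intended.
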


Note that \Cref{thm:main-theorem} provides a bias-variance trade-off for contextual bandits. The first term in \eqref{eq:main-theorem} (regret overhead due to misspecification) depends on $B$, which is a tight upper bound on the average squared bias for the best estimator in the model class $\F$ under the distribution induced by any probability kernel in $\kernelSet$. The second term in \eqref{eq:main-theorem} (regret bound under realizability) depends on the estimation rate $\xi(\cdot,\cdot)$, which captures the variance of the regression oracle estimate over the class $\F$. For more expressive model classes, the bias term $B$ is small, but the variance term $\xi(\cdot,\cdot)$ is large, showing that there is a bias-variance trade-off for contextual bandits that rely on some model class $\F$. A better dependency on the variance term cannot be expected even when realizability holds \citep[see e.g.][]{foster2020beyond}. In \Cref{thm:lower-bound}, we show that one cannot get a better dependency on the bias term either by providing a $\Omega(\sqrt{KB})$ lower bound on the regret overhead due to misspecification for contextual bandits that use regression oracles or rely on a model class $\F$. 


\subsection{Lower bound}
\label{sec:lower-bound}

We prove a new lower bound on the regret overhead due to misspecification for the stochastic contextual bandit setting in terms of the average misspecification error $\sqrt{B}$, where $B$ is defined in \eqref{eq:capb}. 

The issue of model misspecification is specific to contextual bandit algorithms that use regression oracles or rely on some model class $\F$. To state a lower bound on the regret overhead due to misspecification, it is helpful to understand the common characteristics of such algorithms. We argue that the set $\kernelSet$ is central to many contextual bandit algorithms based on regression oracles. In particular, we will argue that at every time-step $t$ such algorithms choose some probability kernel $\tilde{p}_t$ in the convex hull of $\kernelSet$, receive a context $x_t$, and sample an action $\tilde{a}_t$ from $\tilde{p}_t(\cdot|x_t)$.



For example, at every time-step, the algorithms used in \cite{foster2020beyond}, \cite{simchi2020bypassing}, and this work use probability kernels of the form defined in \eqref{eq:action_kernel}, which are in $\kernelSet$.
Similarly, parametric Thompson Sampling algorithms \citep[e.g.,][]{agrawal2013thompson} select actions by following probability kernels that lie in the convex hull of $\kernelSet$. This is because, in our notation,  Thompson Sampling algorithms at every time-step sample a function $\tilde{f}$ from the class $\F$ and then follow the policy $\pi_{\tilde{f}}$, which corresponds to some kernel in $\kernelSet$. The same is true for greedy and epsilon-greedy algorithms that select actions based on a regression oracle since uniform sampling does not depend on contexts and since the greedy policy $\pi_f$ corresponds to some probability kernel in $\kernelSet$.


While algorithms based on upper confidence bounds may not use policies that correspond to kernels in the convex hull of $\kernelSet$, we informally note that these algorithms are asymptotically greedy and hence converge to policies that correspond to kernels in $\kernelSet$.\footnote{UCB algorithms rely on confidence estimates. It wasn't clear to us what the general form of these confidence estimates should be and how they would relate to $\F$.}


\Cref{thm:lower-bound} shows that there is a family of stochastic contextual bandit instances such that, for any probability kernel in the convex hull of $\kernelSet$, the expected instantaneous regret of the induced randomized policy can be lower bounded by $\Omega(\sqrt{KB})$. Hence on these instances, any algorithm that plays randomized policies induced by probability kernels in the convex hull of $\kernelSet$ for at least a constant fraction of time-steps has expected cumulative regret lower bounded by $\Omega(\sqrt{KB}\cdot T)$.

\begin{restatable}[Lower bound]{theorem}{thmLbound}
\label{thm:lower-bound}
Consider any $K\geq 2$ and $B\in [0,1/(2K)]$. One can construct a model class $\F$ and a stochastic contextual bandit instance with $K$ arms. Such that the average misspecification error is $\sqrt{B}$. And for any probability kernel $p$ in the convex hull of the kernel set $\kernelSet$, the expected instantaneous regret of the induced randomized policy can be lower bounded by:
\begin{align}
\label{eq:instantaneous-lower-bound-thm}
\begin{split}
    \E_{(x,r)\sim D} \E_{a\sim p(\cdot|x)} \big[r(\pi^*(x))-r(a)\big] \geq \Omega(\sqrt{KB}) 
\end{split}
\end{align}
\end{restatable}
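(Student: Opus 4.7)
The plan is to construct a hard instance in which the model class $\F$ is deliberately coarse so that every kernel in $\kernelSet$ (and its convex hull) is forced to ignore the context, while the true reward function $f^*$ has an optimal arm that depends on the context. Specifically I would take $\Xscript=\A=[K]$ with $D_{\Xscript}$ uniform over $[K]$, set $f^*(x,a):=\Delta\cdot\I\{a=x\}$ with a gap $\Delta\in[0,1]$ to be calibrated, and let rewards be Bernoulli with mean $f^*$. For the hypothesis class I would take $\F$ to be all functions $f:\Xscript\times\A\to[0,1]$ that are constant in $x$, i.e.\ of the form $f(x,a)=c(a)$. Every $p\in\kernelSet$ then has the form $p(a|x)=g_p(a|f_p(x))$ with $f_p\in\F$ constant in $x$, so $p$ is itself constant in $x$; and since convex combinations of distributions on $\A$ remain distributions on $\A$, the convex hull of $\kernelSet$ also consists purely of context-independent kernels $p(a|x)=q(a)$.

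Given this, both the misspecification level and the regret reduce to explicit one-line calculations that I would execute next. For any constant kernel $q$, independence of context and action makes the minimizer $f\in\F$ equal to $f(a)=\E_x[f^*(x,a)]=\Delta/K$, and a variance calculation gives
\begin{equation*}
\min_{f\in\F}\E_x\E_{a\sim q}\big[(f(x,a)-f^*(x,a))^2\big]=\Delta^2\,\tfrac{K-1}{K^2},
\end{equation*}
which is already independent of $q$, so $B=\Delta^2(K-1)/K^2$ and hence $\Delta=K\sqrt{B/(K-1)}\ge\sqrt{KB}$. For any context-independent $q$ the optimal policy $\pi^*(x)=x$ earns $\Delta$ per round, while the learner earns $\sum_a q(a)\,\E_x[\Delta\,\I\{a=x\}]=\Delta/K$, giving an instantaneous regret of $\Delta(1-1/K)\ge\Delta/2\ge\tfrac{1}{2}\sqrt{KB}$, which is the desired $\Omega(\sqrt{KB})$.

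The main obstacle is not the arithmetic but structural: I need $\F$ to be a genuine model class yet structurally restrictive enough to force \emph{every} kernel an algorithm could use through $\F$, as well as every convex combination, to be context-free. The constant-in-$x$ class is the cleanest way to achieve this, since closure under convex combinations is immediate from the fact that the simplex over $\A$ is convex. A secondary care point is the boundary condition $B\le 1/(2K)$: combined with the identity $K/(K-1)\le 2$ for $K\ge 2$, it exactly ensures $\Delta\le 1$ so that $f^*\in[0,1]$ is a valid reward function, and it keeps the target bound $\sqrt{KB}\le 1/\sqrt{2}$ non-vacuous. Beyond these structural points, everything else is a direct computation.
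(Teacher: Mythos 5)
Your proposal is correct and is essentially the same construction as the paper's: a constant-in-$x$ model class forcing every kernel in the convex hull of $\kernelSet$ to be context-independent, a true reward that places a gap $\Delta=K\sqrt{B/(K-1)}$ on the context-matched arm, the mean $\Delta/K$ as the minimizing constant predictor giving $B=\Delta^2(K-1)/K^2$, and per-round regret $\Delta(1-1/K)=\sqrt{(K-1)B}\geq\sqrt{KB/2}$. The only differences (discrete uniform contexts on $[K]$ with Bernoulli rewards versus the paper's continuous $\mathrm{Unif}(0,K)$ with noiseless rewards) are cosmetic.
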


An immediate implication of \Cref{thm:lower-bound} is that the regret overhead due to misspecification for most contextual bandit algorithms that use regression oracles can be lower bounded by $\Omega(\sqrt{KB}\cdot T)$. Hence showing that the regret upper bound (\Cref{thm:main-theorem}) ensured by $\SafeFalcon$ is optimal. 

\subsection{Improving $\SafeFalcon$ and a Simulation}

In \Cref{sec:algorithm}, we discussed a misspecification test ($\IsSafe$) that checks if the cumulative reward remains above a lower bound $L_t$. At every round where we verify this condition, we can similarly check if the average per-epoch reward remains above a lower bound (see \eqref{eq:average-epoch-reward-lower-bound}). Similar to the argument used in \Cref{sec:understanding-falcon}, one can show that \eqref{eq:average-epoch-reward-lower-bound} holds with high-probability if $m(t)\leq \msafe+1$. Hence adding this test to $\IsSafe$ can only make $\SafeFalcon$ more robust, ensuring \Cref{thm:main-theorem} continues to hold.  
\begin{equation}
\label{eq:average-epoch-reward-lower-bound}
    \begin{aligned}
     &\frac{1}{t-\tau_{m(t)-1}}\sum_{i=\tau_{m(t)-1}}^t r_i(a_i) \geq  l_{m(t)-1} \\
     &- 20.3\sqrt{K}\sqrt{\xi\Big(\tau_{m(t)-1}-\tau_{m(t)-2}, \frac{\delta'}{(m(t))^2}\Big)}\\
      & - \sqrt{\frac{2}{t-\tau_{m(t)-1}}\ln\bigg(\frac{\lceil m(t)+\log_2(\tau_1)\rceil^3}{\delta'}\bigg)}
    \end{aligned}
\end{equation}

Further improvements to $\IsSafe$ can be made by constructing better high-probability lower bounds ($l_{m-1}$) on the expected reward of the optimal policy. One approach to constructing such bounds would be to use offline policy evaluation methods to construct a lower bound on the expected reward of a policy that is estimated to be optimal. We do not pursue this here.

\begin{figure}[h]
    \centering
    \includegraphics[width=0.44\textwidth]{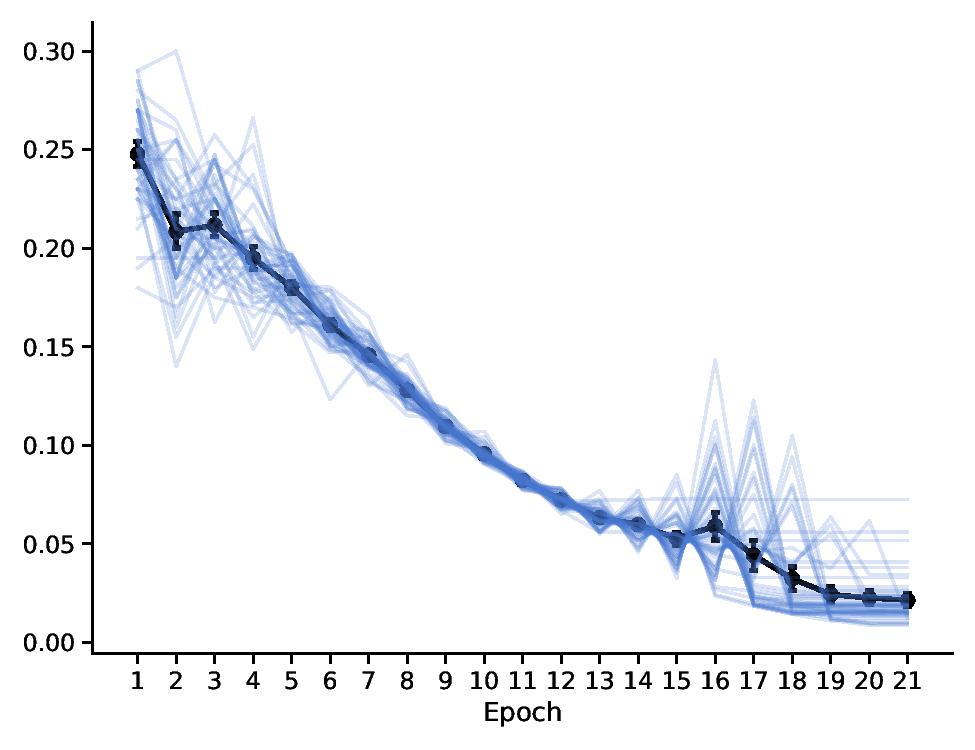}
    \caption{Illustrating that linear $\SafeFalcon$ does not fail on Example \eqref{eq:intro_example_model}. Each epoch starts at round $2^m$. Vertical bars are 95\% confidence intervals around the average per-epoch average regret, aggregated over 50 simulations.}
    \label{fig:safe-falcon}
\end{figure}

To complete our discussion from \Cref{sec:introduction}, we simulate a version of linear $\SafeFalcon$ on Example~\eqref{eq:intro_example_model}. In particular, we implement a version of $\SafeFalcon$ that uses two misspecification tests, a test that checks if the cumulative reward remains above a lower bound (line 3 of $\IsSafe$) and a test that checks if the average per-epoch reward remains above a lower bound \eqref{eq:average-epoch-reward-lower-bound}. Other parameters are chosen as in the introduction example (see \Cref{sec:introduction_example} for details). The results are shown in Figure \ref{fig:safe-falcon}. 



Despite this example not being linearly realizable, in contrast to \texttt{FALCON+} (see \Cref{fig:falcon_failure}), average per-epoch regret under $\SafeFalcon$ does not increase in later periods (see \Cref{fig:safe-falcon}). $\SafeFalcon$ detects misspecification sometime after epoch 12 and defaults to the action selection kernel used in epoch $\hat{m}$ thereafter. For each simulation, the selected safe policy is fixed and attains constant regret, which explains the horizontal lines seen on the right side of the graph in \Cref{fig:safe-falcon}. An interesting direction for future improvement is to develop algorithms that continue adaptive experimentation after epoch $\msafealg$.



\section{Discussion}

In this work, we presented a contextual bandit algorithm that is computationally tractable, flexible, and supports general-purpose function approximation. The ideas used here are relatively simple and allow us to provide a reduction from contextual bandits to offline regression without assuming realizability. We do this by modifying the \texttt{FALCON+} algorithm, allowing us to inherit the optimal guarantees of \cite{simchi2020bypassing} when realizability holds. When realizability doesn't hold, we get an optimal bound on the regret overhead due to misspecification in terms of the average misspecification error. We provide both upper (\Cref{thm:main-theorem}) and lower (\Cref{thm:lower-bound}) bounds on regret, allowing us to quantify the bias-variance trade-off for contextual bandit algorithms based on regression oracles.



\section{Acknowledgments}

We are grateful for the generous financial support provided by the Sloan Foundation, Schmidt Futures and the Office of Naval Research grant N00014-19-1-2468. SKK acknowledges generous support from the Dantzig-Lieberman Operations Research Fellowship.

\bibliography{example_paper}
\bibliographystyle{icml2021.bst}

\clearpage
\appendix
\onecolumn

\section{Outline}
\label{app:outline}


In \Cref{app:preliminaries}, we establish additional notation that will useful for our proofs. We detail the proofs for the upper and lower bounds in \Cref{app:proof-upper} and \Cref{app:proof-lower} respectively. Finally in \Cref{sec:introduction_example} we explain the introductory example in more detail and provide some more intuition.

\subsection{Preliminaries}
\label{app:preliminaries}

Most of the notation and definitions described below will be the same as in \cite{simchi2020bypassing}. 

A ``policy'' is a deterministic mapping from contexts to actions. Let $\Psi = \A^{\Xscript}$ be the universal policy space containing all possible policies. The expected instantaneous reward of the policy $\pi$ with respect to the model $f$ is defined as
\begin{equation}
    \label{eq:instantaneous-reward}
    R_f(\pi) := \E_{x \sim D_\Xscript}[f(x, \pi(x))].
\end{equation}

Recalling that $f^*$ is the true conditional means of rewards, we write $R(\pi)$ to mean $R_{f^*}(\pi)$, the true expected instantaneous reward for policy $\pi$. The policy $\pi_f$ that is induced by model $f$ is given by $\pi_f(x) := \arg\max_a f(x, a)$ for every $x$. This policy has the highest instantaneous reward with respect to the model $f$, that is $\pi_f=\arg\max_{\pi\in\Psi} R_f(\pi)$.  

The expected instantaneous regret of a policy $\pi$ with respect to the outcome model $f$ is defined as
\begin{equation}
    \label{eq:instantaneous-regret}
    \Reg_f(\pi) := \E_{x \sim D_\Xscript}[f(x,\pi_f(x)) - f(x, \pi(x))].
\end{equation}
 
We write $\Reg(\pi)$ to mean $\Reg_{f^*}(\pi)$, the true expected instantaneous regret for policy $\pi$. We also let $\Gamma_{t}$ denote the set of observations up to and including time $t$. That is
\begin{align}
    \label{eq:history}
    \Gamma_{t} := \{(x_s, a_s, r_s(a_s))\}_{s=1}^{t}
\end{align}

Given any probability kernel $p$ from $\A\times\Xscript$ to $[0,1]$, from Lemma 3 in \cite{simchi2020bypassing}, there exists a unique product probability measure on $\Psi$, given by:
\begin{equation}
    \label{eq:q-product}
    Q_p(\pi) := \prod_{x\in\Xscript} p(\pi(x)|x).
\end{equation}
This measure satisfies the following property
\begin{align}
    \label{eq:connecting-p-and-Q}
  p(a|x) = \sum_{\pi\in\Psi} \I\{\pi(x)=a\}Q_p(\pi).
\end{align}
Since any probability kernel $p$ from $\A\times\Xscript$ to $[0,1]$ induces the distribution $Q_p$ over the set of deterministic policies $\Psi$, we can think of $Q_p$ as a randomized policy induced by $p$. Equations \eqref{eq:connecting-p-and-Q} and \eqref{eq:q-product} establish a correspondence between the probability kernel $p$ and the induced randomized policy $Q_p$. For any probability kernel $p$ and any policy $\pi$, we let $V(p,\pi)$ denote the expected inverse probability. \footnote{In \cite{simchi2020bypassing}, this term is called the decisional divergence between the randomized policy $Q_p$ and deterministic policy $\pi$.}
\begin{equation}
    \label{eq:decisional-divergence}
    V(p,\pi):=\E_{x\sim D_{\Xscript}}\bigg[\frac{1}{p(\pi(x)|x)}\bigg]
\end{equation}

\section{Proof for upper bound}
\label{app:proof-upper}

In this section we prove \Cref{thm:main-theorem}, we start by establishing some more additional notation. We say an epoch $m$ is safe when the status at the end of the epoch is safe, that is the variable $\safe$ is still set to $\true$ at the end of this epoch. Let $\msafealg$ be the last safe epoch, that is:
\begin{equation}
    \label{eq:msafealg}
    \begin{aligned}
      \msafealg := \max \Bigg\{ m \suchthat \text{ the epoch $m$ is safe} \Bigg\}.
    \end{aligned}
\end{equation}
Note that, for all $m\leq \msafealg$, the epoch $m$ is safe. Now let $\msafe$ be such that:
\begin{equation}
    \label{eq:msafe-2}
    \begin{aligned}
      \msafe := \max \Bigg\{ m \suchthat B \leq \xi\Big(\tau_m-\tau_{m-1}, \frac{\delta'}{m^2} \Big) \Bigg\}.
    \end{aligned}
\end{equation}
Where $\delta'=\delta/13$. As discussed in \Cref{sec:understanding-falcon}, the epoch $\msafe$ is critical to our theoretical analysis and we will show that with high-probability $\msafe+1$ is safe. We also let $\TsafeCheck$ be the set of time-steps where \cref{alg:check-is-safe} checks the safety condition (see $\IsSafe$). 
\begin{equation}
\label{eq:TsafeCheck-def}
    \begin{aligned}
    \TsafeCheck := \bigg\{ t\in[T] | \text{ status is safe at the start of the time step, $t=\tau_{m(t)}$ or $\log_2(t-\tau_{m(t)-1})$ is integral} \bigg\}.
    \end{aligned}
\end{equation}

For short hand, we let $Q_m\equiv Q_{p_m}$. With some abuse of notation, we let $p_t$ denote the action selection kernel used at time-step $t$. Again with some abuse of notation, we let $Q_t\equiv Q_{p_t}$. 

\subsection{High probability events}
\label{sec:high-prob-events-1}

\Cref{thm:main-theorem} provides certain high probability bounds on cumulative regret. As a preliminary step in these proofs, it will be helpful to show that the events $\eventReg$, $\eventEval$, $\eventEvalRegret$ defined below hold with high-probability. Event $\eventReg$ ensures that our regression estimates are "good" models for the first few epochs. Event $\eventEval$ ensures that the lower bounds constructed by $\ChoosePi$ are valid, this event also includes symmetric upper bounds. Event $\eventEvalRegret$ helps us show that the misspecification tests that we use in $\IsSafe$ are valid.

We start with the event $\eventReg$. This describes the event where, for any epoch $m\in [\msafe]\cap [\msafealg]$, the expected squared error difference between the true model ($f^*$) and the estimated model ($\hatf_{m+1}$) can be bounded purely in terms of the known estimation rate of the regression algorithm.
\begin{equation}
    \label{eq:w-event-Reg}
    \begin{aligned}
      \eventReg := \Bigg\{ \forall m \in [m^*]\cap [\msafealg], \; \E_{x\sim D_{\Xscript}}\E_{a\sim p_m(\cdot|x)}[(\hatf_{m+1}(x,a)-f^*(x,a))^2] \leq 2\xi\Big(\tau_m-\tau_{m-1}, \frac{\delta'}{m^2} \Big) \Bigg\}.
    \end{aligned}
\end{equation}

\begin{lemma}
\label{lem:high-prob-eventReg}
Suppose the regression algorithm used in $\SafeFalcon$ satisfies \Cref{ass:main-assumption}. Then the event $\eventReg$ holds with probability at least $1-2\delta'$.
\end{lemma}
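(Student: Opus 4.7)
The plan is to apply \Cref{ass:main-assumption} to the action selection kernel $p_m$ of each epoch $m \in [\msafe]$, use the defining property of $\msafe$ to absorb the bias term $B$ into the estimation rate, and close with a union bound.

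First I would fix an index $m \in [\msafe]$ and condition on $\Gamma_{\tau_{m-1}}$, the history through the end of epoch $m-1$. Given this conditioning, the estimated model $\hatf_m$ is deterministic, and hence so is the action selection kernel $p_m$ defined in \eqref{eq:action_kernel}. By construction, $p_m(a\mid x)$ depends on $x$ only through $\hatf_m(x,\cdot)\in [0,1]^K$: taking $f_{p_m}=\hatf_m\in\F$ and letting $g_{p_m}$ be the deterministic kernel that maps a prediction vector $v\in[0,1]^K$ to the probabilities prescribed by \eqref{eq:action_kernel} (with $\hatf_m(x,\cdot)$ replaced by $v$), we see that $p_m\in\kernelSet$.

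On the event $\{m\leq\msafealg\}$, epoch $m$ is safe, so the data $S_m$ collected during that epoch consists of $n_m:=\tau_m-\tau_{m-1}$ samples drawn i.i.d.\ from $D(p_m)$ (conditionally on $\Gamma_{\tau_{m-1}}$), and $\hatf_{m+1}$ is the output of the regression oracle on these samples. Invoking \Cref{ass:main-assumption} with the kernel $p_m$, sample size $n_m$, and confidence $\zeta=\delta'/m^2$, we obtain that with conditional probability at least $1-\delta'/m^2$,
$$\E_{x\sim D_{\Xscript}}\E_{a\sim p_m(\cdot|x)}\bigl[(\hatf_{m+1}(x,a)-f^*(x,a))^2\bigr] \leq B + \xi\Bigl(n_m,\tfrac{\delta'}{m^2}\Bigr).$$
The definition of $\msafe$ in \eqref{eq:msafe-2} gives $B\leq \xi(n_m,\delta'/m^2)$ for every $m\leq\msafe$, so the right-hand side is at most $2\xi(n_m,\delta'/m^2)$, which is exactly the bound required by $\eventReg$ at this index. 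Indices $m\in[\msafe]$ with $m>\msafealg$ fall outside the intersection $[\msafe]\cap[\msafealg]$ and hence place no constraint on $\eventReg$, so they may be ignored.

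Since $\msafe$ is a deterministic quantity (it depends only on $B$, $\xi$, $\delta'$, and the $\tau_m$'s), I would finish with a union bound over $m\in[\msafe]$. The total failure probability is at most
$$\sum_{m=1}^{\msafe} \frac{\delta'}{m^2} \leq \delta'\sum_{m=1}^{\infty}\frac{1}{m^2} = \frac{\pi^2}{6}\,\delta' < 2\delta',$$
which yields the claim. The only real subtlety (rather than a genuine obstacle) is the conditional independence bookkeeping: one has to condition on $\Gamma_{\tau_{m-1}}$ so that $p_m$ may be treated as a fixed element of $\kernelSet$ when invoking \Cref{ass:main-assumption}, and one has to verify that $p_m\in\kernelSet$ from the explicit form in \eqref{eq:action_kernel}. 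Everything else is routine.
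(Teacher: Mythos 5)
Your proposal is correct and follows essentially the same route as the paper: verify $p_m\in\kernelSet$, treat the epoch-$m$ data as i.i.d.\ draws from $D(p_m)$ conditionally on the history, invoke \Cref{ass:main-assumption} with confidence $\delta'/m^2$, absorb $B$ via the definition of $\msafe$, and union bound. Your conditioning on $\Gamma_{\tau_{m-1}}$ is just a slightly more explicit rendering of the paper's ``thought experiment'' argument, and your union bound over $[\msafe]$ versus the paper's over $[\msafealg]$ is an immaterial difference.
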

\begin{proof}
Consider any epoch $m$ such that epoch $m-1$ was safe. Since epoch $m-1$ was safe, for the first time-step in epoch $m$ the algorithm samples an action from the probability kernel $p_m$. It may so happen that the status of the algorithm switches at the end of some time-step in epoch $m$ and the algorithm no longer samples actions according to the kernel $p_m$. As long as the status of the algorithm does not switch in epoch $m$, for the purposes of estimating $\hatf_{m+1}$, we want to argue that the data collected in epoch $m$ can be treated as iid samples from the distribution $D(p_m)$. 

One way to see this is by considering the following thought experiment. At the start of epoch $m$, the environment generates $\tau_m-\tau_{m-1}$ data points that are iid sampled from the distribution $D(p_m)$. The environment then runs the regression algorithm on this data, and generates the model $\hatf_{m+1}$. The environment sequentially shows us these data points throughout epoch $m$, as long as the status of the algorithm is safe. Additionally, we also observe $\hatf_{m+1}$ at the end of epoch $m$ if the status of the algorithm was safe throughout the epoch. Regardless of whether we observe the model $\hatf_{m+1}$, the environment constructs $\hatf_{m+1}$ by running the regression algorithm on iid samples from $D(p_m)$.

Further note that the kernel $p_m$ lies in $\kernelSet$. Hence from \Cref{ass:main-assumption}, with probability $1-\delta'/m^2$, we have:
\begin{equation}
    \label{eq:mse_event}
    \E_{x\sim D_{\Xscript}}\E_{a\sim p_m(\cdot|x)}[(\hatf_{m+1}(x,a)-f^*(x,a))^2] \leq B + \xi\Big(\tau_m-\tau_{m-1}, \frac{\delta'}{m^2} \Big).
\end{equation}
Therefore, the probability that \eqref{eq:mse_event} does not hold for some epoch $m\in[\msafealg]$ can be bounded by:
$$ \sum_{m=1}^{\infty} \frac{\delta'}{m^2} \leq 2\delta'. $$
Hence from the definition of $\msafe$, we get that $\eventReg$ holds with probability at least $1-2\delta'$.
\end{proof}

\begin{lemma}
\label{lem:conditional-reward}
For any time-step $t \geq 1$, we have:
\begin{align*}
    &\E_{x_t,r_t,a_t}[r_t(a_t)|\Gamma_{t-1}] =  \sum_{\pi\in\Psi}Q_t(\pi)R(\pi).
\end{align*}
\end{lemma}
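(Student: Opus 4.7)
The plan is to unpack the conditional expectation definition-by-definition, using the correspondence between the probability kernel $p_t$ and the induced randomized policy $Q_t \equiv Q_{p_t}$ given by equations \eqref{eq:q-product} and \eqref{eq:connecting-p-and-Q}.

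First I would observe that the action selection kernel $p_t$ used at time $t$ is $\Gamma_{t-1}$-measurable: in the ``safe'' branch, $p_t$ is either $p_{m(t)}$ (built from $\hatf_{m(t)}$, which is fit on $S_{m(t)-1} \subseteq \Gamma_{t-1}$) or $p_{\hat m}$ for an index $\hat m < m(t)$ determined from past data. Conditionally on $\Gamma_{t-1}$, the pair $(x_t, r_t)$ is drawn from $D$ independently of $\Gamma_{t-1}$ (this is the stochastic contextual bandit assumption), and then $a_t \sim p_t(\cdot \mid x_t)$.

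Next I would carry out the direct computation, conditioning first on $x_t$. Using $\E[r_t(a)\mid x_t=x]=f^*(x,a)$:
\begin{align*}
\E[r_t(a_t)\mid \Gamma_{t-1}]
&= \E_{x_t\sim D_{\Xscript}}\!\left[\sum_{a\in\A} p_t(a\mid x_t)\,f^*(x_t,a)\right].
\end{align*}
Then I would apply the identity \eqref{eq:connecting-p-and-Q}, which gives $p_t(a\mid x_t) = \sum_{\pi\in\Psi}\I\{\pi(x_t)=a\}\,Q_t(\pi)$, substitute it into the display above, and swap the order of the (absolutely convergent, bounded) sums. The indicator collapses the sum over $a$, leaving $f^*(x_t,\pi(x_t))$ inside the expectation over $x_t$. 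Recalling the definition $R(\pi) = R_{f^*}(\pi) = \E_{x\sim D_\Xscript}[f^*(x,\pi(x))]$ from \eqref{eq:instantaneous-reward}, this yields $\sum_{\pi\in\Psi} Q_t(\pi)\,R(\pi)$, as required.

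There is no real obstacle here: the statement is a routine Fubini/tower-property exercise once the measurability of $p_t$ with respect to $\Gamma_{t-1}$ is noted. The only mildly subtle point is that $\Psi=\A^\Xscript$ can be uncountable, so the sum $\sum_{\pi\in\Psi} Q_t(\pi)\,R(\pi)$ must be interpreted via the product measure $Q_t$ from \eqref{eq:q-product}; but since $f^*$ is bounded in $[0,1]$ and $Q_t$ is a genuine probability measure on $\Psi$, Fubini applies without issue and all exchanges of sum and expectation are justified. This same verification is already implicit in Lemma 3 of \cite{simchi2020bypassing}, which I would cite as the source of \eqref{eq:connecting-p-and-Q}.
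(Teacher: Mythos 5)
Your proposal is correct and follows essentially the same route as the paper's proof: condition on $\Gamma_{t-1}$, write the expectation as $\E_{x\sim D_\Xscript}[\sum_a p_t(a|x)f^*(x,a)]$, substitute \eqref{eq:connecting-p-and-Q}, and swap the sums to recover $\sum_{\pi\in\Psi}Q_t(\pi)R(\pi)$. The extra remarks on the $\Gamma_{t-1}$-measurability of $p_t$ and on interpreting the sum over an uncountable $\Psi$ via the product measure are fine but not needed beyond what the paper already does.
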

\begin{proof}
Consider any time-step $t
\geq 1$, then from \cref{eq:connecting-p-and-Q} relating $p(\cdot|\cdot)$ and $Q_p$ we have the following equalities:
\begin{align*}
\begin{split}
  &\E_{x_t,r_t,a_t}[r_t(a_t)|\Gamma_{t-1}]\\
  & = \E_{x\sim D_{\Xscript}, a \sim p_t(\cdot|x)}[f^*(x,a)]\\
  & = \E_{x\sim D_{\Xscript}}\Bigg[\sum_{a\in\A}p_t(a|x)f^*(x,a)\Bigg]\\
  & = \E_{x\sim D_{\Xscript}}\Bigg[\sum_{a\in\A}\sum_{\pi\in\Psi}\I(\pi(x)=a)Q_t(\pi)f^*(x,a)\Bigg]\\
  & = \sum_{\pi\in\Psi}Q_t(\pi)\E_{x\sim D_{\Xscript}}[f^*(x,\pi(x))]\\
  & =  \sum_{\pi\in\Psi}Q_t(\pi)R(\pi).
\end{split}
\end{align*}
\end{proof}

The event $\eventEval$ provides upper and lower bounds on the expected reward of the randomized policy $Q_m$, for all $m\in[\msafealg]$. 
\begin{equation}
    \label{eq:w-event-Eval}
    \begin{aligned}
      \eventEval := \Bigg\{ \forall m \in [\msafealg], \; &\sum_{\pi\in\Psi} Q_{m}(\pi)R(\pi) \geq  \frac{1}{|S_{m}|}\sum_{(x,a,r)\in S_{m}} r \; - \sqrt{\frac{1}{2|S_{m}|}\ln\bigg(\frac{m^2}{\delta'}\bigg)},\\
      & \sum_{\pi\in\Psi} Q_{m}(\pi)R(\pi) \leq  \frac{1}{|S_{m}|}\sum_{(x,a,r)\in S_{m}} r \; + \sqrt{\frac{1}{2|S_{m}|}\ln\bigg(\frac{m^2}{\delta'}\bigg)} \Bigg\}. 
    \end{aligned}
\end{equation}


\begin{lemma}
\label{lem:high-rpob-eventEval}
The event $\eventEval$ holds with probability at least $1-4\delta'$.
\end{lemma}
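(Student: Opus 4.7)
The plan is to apply a two-sided Hoeffding (or Azuma--Hoeffding) inequality epoch-by-epoch and then union bound. The key observation is that if epoch $m$ is safe, then every action in epoch $m$ is drawn from the fixed kernel $p_m$, so by \Cref{lem:conditional-reward} the observed rewards $r_t(a_t)$ for $t \in (\tau_{m-1}, \tau_m]$ satisfy
\begin{equation*}
    \E[r_t(a_t) \mid \Gamma_{t-1}] \;=\; \sum_{\pi \in \Psi} Q_m(\pi) R(\pi),
\end{equation*}
which is the same deterministic quantity for every $t$ inside the epoch. Together with boundedness $r_t(a_t) \in [0,1]$, this gives me exactly the martingale-difference setup needed for Azuma's inequality, whose deviation rate matches the width $\sqrt{\tfrac{1}{2|S_m|}\ln(m^2/\delta')}$ appearing in the definition of $\eventEval$.

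The mild subtlety is measurability: the data set $S_m$, and even whether epoch $m$ contributes to $[\msafealg]$, depends on the run of the algorithm. To avoid this I would reuse the thought-experiment coupling employed in the proof of \Cref{lem:high-prob-eventReg}: at the start of epoch $m$, imagine the environment pre-sampling $\tau_m - \tau_{m-1}$ i.i.d.\ triples from $D(p_m)$ and then revealing them one by one as long as the status remains safe. On the event $\{m \leq \msafealg\}$ the revealed data coincides with $S_m$, so the empirical mean of $S_m$ equals the empirical mean of the pre-sampled i.i.d.\ sequence, whose expectation is $\sum_{\pi} Q_m(\pi) R(\pi)$. For each fixed $m$ I would then apply two-sided Hoeffding to this pre-sampled sequence, obtaining that with probability at least $1 - 2\delta'/m^2$,
\begin{equation*}
    \Bigg| \frac{1}{|S_m|} \sum_{(x,a,r) \in S_m} r \;-\; \sum_{\pi \in \Psi} Q_m(\pi) R(\pi) \Bigg| \;\leq\; \sqrt{\frac{1}{2|S_m|} \ln\!\bigg(\frac{m^2}{\delta'}\bigg)}.
\end{equation*}

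Finally I would union bound over all epochs $m \in \N$, giving a total failure probability of at most $\sum_{m \geq 1} 2\delta'/m^2 \leq 4\delta'$ (using $\sum_m 1/m^2 = \pi^2/6 < 2$). On the complement, the two-sided inequality above holds simultaneously for every $m$, and restricting attention to $m \in [\msafealg]$ yields both the lower and upper bounds in the definition of $\eventEval$.

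The only non-routine step is the measurability bookkeeping in the second paragraph; everything else is standard concentration plus a geometric-series union bound, so I do not anticipate a real technical obstacle beyond cleanly articulating the coupling already used for $\eventReg$.
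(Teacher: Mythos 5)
Your proposal is correct and matches the paper's argument: the paper likewise invokes the thought-experiment coupling from \Cref{lem:high-prob-eventReg} to treat the epoch-$m$ data as i.i.d.\ draws from $D(p_m)$, applies two-sided Hoeffding with failure probability $2\delta'/m^2$ per epoch, and union bounds via $\sum_m 2\delta'/m^2 \leq 4\delta'$. No gaps.
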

\begin{proof}
Consider any safe epoch $m$. Similar to the argument in \Cref{lem:high-prob-eventReg}, for the purpose of estimating the expected reward of the randomized policy $Q_m$, we can treat the data generated in epoch $m$ as iid samples from the distribution $D(p_m)$. Since rewards lie in the range $[0,1]$, from Hoeffding's inequality, with probability at least $1-2\delta'/m^2$, we get:
\begin{align*}
    \sum_{\pi\in\Psi} Q_{m}(\pi)R(\pi) \geq  \frac{1}{|S_{m}|}\sum_{(x,a,r)\in S_{m}} r \; - \sqrt{\frac{1}{2|S_{m}|}\ln\bigg(\frac{m^2}{\delta'}\bigg)},\\
    \sum_{\pi\in\Psi} Q_{m}(\pi)R(\pi) \leq  \frac{1}{|S_{m}|}\sum_{(x,a,r)\in S_{m}} r \; + \sqrt{\frac{1}{2|S_{m}|}\ln\bigg(\frac{m^2}{\delta'}\bigg)}.
\end{align*}

Therefore, we get that these bounds hold for all $m\in [\msafealg]$ with probability at least:
$$ 1 - \sum_{m=1}^{\infty} \frac{2\delta'}{m^2} \geq 1 - 4\delta'.$$
\end{proof}

For all time-steps $t'\in\TsafeCheck$, the event $\eventEvalRegret$ provides a lower bound on the cumulative reward up to time $t'$ in terms of the expected cumulative reward.
\begin{equation}
    \label{eq:w-event-Eval-Regret}
    \begin{aligned}
      \eventEvalRegret := \Bigg\{ &\forall t' \in [\TsafeCheck], \; \sum_{t=1}^{t'}\Bigg(\sum_{\pi\in\Psi} Q_{t}(\pi)R(\pi) - r_t(a_t) \Bigg)  \leq  \sqrt{2t'\ln\Bigg(\frac{\lceil m(t')+\log_2(\tau_1)\rceil^3}{\delta'}\Bigg)}   \Bigg\}. 
    \end{aligned}
\end{equation}

\begin{lemma}
The event $\eventEvalRegret$ holds with probability at least $1-\delta'$.
\end{lemma}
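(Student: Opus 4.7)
The plan is to set up a martingale whose increments are centered reward deviations and then apply Azuma--Hoeffding combined with a careful union bound over $\TsafeCheck$. By \Cref{lem:conditional-reward}, for every $t \geq 1$ we have $\E[r_t(a_t) \mid \Gamma_{t-1}] = \sum_{\pi \in \Psi} Q_t(\pi) R(\pi)$. Hence the sequence
\begin{equation*}
  d_t := \sum_{\pi \in \Psi} Q_t(\pi) R(\pi) - r_t(a_t)
\end{equation*}
is a martingale difference sequence with respect to the filtration $\{\Gamma_{t-1}\}_t$; the identity holds regardless of whether the status at time $t$ is ``safe'' (in which case $p_t = p_{m(t)}$) or ``not safe'' (in which case $p_t = p_{\msafealg}$). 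Since $r_t(a_t) \in [0,1]$ and $R(\pi) \in [0,1]$, the increments satisfy $|d_t| \leq 1$.

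Next, for any fixed $t' \in \TsafeCheck$ and any $p \in (0,1)$, Azuma--Hoeffding gives
\begin{equation*}
  \mathbb{P}\!\left(\sum_{t=1}^{t'} d_t > \sqrt{2 t' \ln(1/p)}\right) \leq p.
\end{equation*}
I would specialize this by choosing $p = p_{t'} := \delta'/\lceil m(t') + \log_2(\tau_1)\rceil^3$; the resulting deviation is exactly the bound appearing in the definition of $\eventEvalRegret$.

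The remaining step is a union bound over $t' \in \TsafeCheck$, handled by partitioning $\TsafeCheck$ by epoch. No checks occur in epoch $1$ (Algorithm \ref{alg:check-is-safe} is invoked only when $m \geq 2$). For $m \geq 2$, epoch $m$ has length $\tau_m - \tau_{m-1} = 2^{m-2}\tau_1$, and the check points are $\tau_{m-1}+2^k$ for $0 \leq k \leq \lfloor \log_2(\tau_m - \tau_{m-1})\rfloor$ together with possibly $\tau_m$. Hence the number of check points in epoch $m$ is at most $\lceil m + \log_2(\tau_1)\rceil$. Therefore the total failure probability is bounded by
\begin{equation*}
  \sum_{m \geq 2} \lceil m + \log_2(\tau_1)\rceil \cdot \frac{\delta'}{\lceil m + \log_2(\tau_1)\rceil^3} \;=\; \delta' \sum_{m \geq 2} \frac{1}{\lceil m + \log_2(\tau_1)\rceil^2} \;\leq\; \frac{\pi^2}{6}\,\delta'.
\end{equation*}
The residual constant factor ($\pi^2/6 < 2$) is folded into the universal constant $C$ in the definition $\delta' = C\delta$, yielding the stated $1-\delta'$ guarantee.

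The proof is essentially bookkeeping; the only mild subtlety is verifying the martingale property uniformly across potential ``safe''/``not safe'' transitions, which is immediate from \Cref{lem:conditional-reward}, and ensuring that the $1/m^3$-style weights in the per-time failure probabilities combine with the linear-in-$m$ count of check points to produce a summable ($1/m^2$) series. No part of the argument should pose a genuine obstacle.
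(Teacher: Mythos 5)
Your proof is correct and follows essentially the same route as the paper: the same martingale difference sequence justified by \Cref{lem:conditional-reward}, Azuma--Hoeffding at each checkpoint with failure probability $\delta'/\lceil m(t')+\log_2(\tau_1)\rceil^3$, and a union bound via the count of at most $\lceil m+\log_2(\tau_1)\rceil$ checkpoints per epoch. The only cosmetic difference is at the very end: since $\tau_1\geq 2$ gives $\log_2(\tau_1)\geq 1$, the series is bounded by $\sum_{j\geq 3}j^{-2}<1$, so you already obtain $1-\delta'$ directly and there is no need to absorb a $\pi^2/6$ factor into the constant $C$.
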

\begin{proof}
For any time-step $t\in[T]$, define:
$$ M_t:= \sum_{\pi\in\Psi} Q_{t}(\pi)R(\pi) - r_t(a_t).$$
From \Cref{lem:conditional-reward}, we have that $\E[M_t|\Gamma_{t-1}]=0$. Also note that for any time-step $t$, we have $|M_t|\leq 1$. Now consider any time-step $t'\in\TsafeCheck$. From Azuma's inequality, with probability $1-\delta'/\lceil m(t')+\log_2(\tau_1)\rceil^3$, we get:
$$ \sum_{t=1}^{t'} M_t \leq \sqrt{2t'\ln\Bigg(\frac{(m(t')+\log_2(\tau_1))^3}{\delta'}\Bigg)}. $$
Any epoch $m$ has at most $\lceil m+\log_2(\tau_1) \rceil$ time-steps in $\TsafeCheck$. Therefore $\eventEvalRegret$ holds with probability at least:
$$ 1 - \sum_{t'\in\TsafeCheck} \frac{\delta'}{\lceil m(t')+\log_2(\tau_1)\rceil^3} \geq 1 - \sum_{m=1}^{\infty} \frac{\delta'}{(m+\log_2(\tau_1))^2} \geq 1 - \delta'. $$
\end{proof}

\subsection{Adapting \texttt{FALCON+}}


As we have stated before, both our algorithm and analysis builds on the work of \cite{simchi2020bypassing}. In this section, without assuming realizability, we show that the analysis of \texttt{FALCON+} continues to hold for the first few epochs of $\SafeFalcon$. The simple observation that allows us to make this extension is that \Cref{lem:reg-est-accuracy} can be proved without assuming realizability.

Lemma~\ref{lem:QmRegEst} is more or less a restatement of Lemma 5 in \cite{simchi2020bypassing}, and the proof stays the same. We only include the proof for completeness, as it states a key bound on the estimated regret of the randomized policy $Q_m$.

\begin{restatable}[]{lemma}{lemQmRegEst}
\label{lem:QmRegEst}
For any safe epoch $m$, we have:
$$ \sum_{\pi\in\Psi} Q_m(\pi)\Reg_{\hatf_m}(\pi) \leq \frac{K}{\gamma_m}. $$
\end{restatable}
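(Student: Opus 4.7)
The plan is to reduce the quantity $\sum_{\pi\in\Psi}Q_m(\pi)\Reg_{\hatf_m}(\pi)$, which is a sum over the (exponentially large) deterministic policy space, to a per-context sum over the $K$ actions, and then exploit the explicit form of the action selection kernel $p_m$ in \eqref{eq:action_kernel} to bound each term by $1/\gamma_m$.

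First I would rewrite the left-hand side using the definition of $\Reg_{\hatf_m}(\pi)$ in \eqref{eq:instantaneous-regret} and pull the $\E_{x\sim D_{\Xscript}}$ outside the sum over policies. Since $\pi_{\hatf_m}(x) = \hata(x) := \arg\max_a \hatf_m(x,a)$, this gives
\begin{align*}
   \sum_{\pi\in\Psi}Q_m(\pi)\Reg_{\hatf_m}(\pi)
   &= \E_{x\sim D_{\Xscript}}\Bigg[\hatf_m(x,\hata(x)) - \sum_{\pi\in\Psi}Q_m(\pi)\,\hatf_m(x,\pi(x))\Bigg].
\end{align*}
Applying the identity \eqref{eq:connecting-p-and-Q} that relates $p_m$ and $Q_m$, namely $\sum_{\pi}Q_m(\pi)\I\{\pi(x)=a\}=p_m(a|x)$, the inner policy-sum collapses to $\sum_{a\in\A}p_m(a|x)\hatf_m(x,a)$. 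Splitting off the term $a = \hata(x)$ and using $p_m(\hata(x)|x) = 1-\sum_{a\neq\hata(x)}p_m(a|x)$, this reduces to
\begin{align*}
   \sum_{\pi\in\Psi}Q_m(\pi)\Reg_{\hatf_m}(\pi)
   &= \E_{x\sim D_{\Xscript}}\Bigg[\sum_{a\neq\hata(x)} p_m(a|x)\,\big(\hatf_m(x,\hata(x))-\hatf_m(x,a)\big)\Bigg].
\end{align*}

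Next I would plug in the explicit formula for $p_m(a|x)$ from \eqref{eq:action_kernel}. For each $a\neq \hata(x)$, letting $\Delta_a := \hatf_m(x,\hata(x)) - \hatf_m(x,a) \geq 0$, the summand becomes $\Delta_a / (K + \gamma_m \Delta_a)$. Since $\Delta_a \geq 0$ and $\gamma_m > 0$, we have $\Delta_a/(K+\gamma_m\Delta_a) \leq \Delta_a/(\gamma_m\Delta_a) = 1/\gamma_m$, with the convention that the ratio is $0$ if $\Delta_a = 0$. Summing over the at most $K-1$ suboptimal actions yields a per-context bound of $(K-1)/\gamma_m \leq K/\gamma_m$, and taking the expectation over $x$ preserves the bound, giving the claim.

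I do not expect any genuine obstacle here: the lemma is essentially algebraic, relying only on the definitions and not on realizability, on safety of the epoch, or on any probabilistic event. The only subtlety worth flagging is that the argument is purely deterministic given $\hatf_m$; the hypothesis that epoch $m$ is safe is used only to ensure $p_m$ is actually the kernel being played, so that the quantity on the left-hand side is meaningful as the expected instantaneous $\hatf_m$-regret of the randomized policy used in that epoch.
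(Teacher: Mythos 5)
Your proposal is correct and follows essentially the same route as the paper: unpack $\Reg_{\hatf_m}$, collapse the policy sum to a per-action sum via \eqref{eq:connecting-p-and-Q}, substitute the explicit form of $p_m$ from \eqref{eq:action_kernel}, and bound each summand by $1/\gamma_m$. Your handling of the $a=\hata(x)$ term is slightly more careful (yielding $(K-1)/\gamma_m$), but this is the same argument.
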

\begin{proof}
This follows essentially from unpacking the definitions of regret \eqref{eq:instantaneous-regret} and the representation of the action selection kernel $p_m$ as in \eqref{eq:action_kernel} and \eqref{eq:connecting-p-and-Q}.
\begin{align*}
    & \sum_{\pi\in\Psi} Q_m(\pi)\Reg_{\hatf_m}(\pi) = \sum_{\pi\in\Psi} Q_m(\pi)\E_{x\sim D_{\Xscript}}\Big[ \hatf_m(x,\pi_{\hatf_m}(x)) - \hatf_m(x,\pi(x)) \Big]\\
    & = \E_{x\sim D_{\Xscript}}\Big[ \sum_{\pi\in\Psi} Q_m(\pi)\Big(\hatf_m(x,\pi_{\hatf_m}(x)) - \hatf_m(x,\pi(x))\Big) \Big]\\
    & = \E_{x\sim D_{\Xscript}}\Big[ \sum_{a\in\A} \sum_{\pi\in\Psi} \I(\pi(x)=a) Q_m(\pi)\Big(\hatf_m(x,\pi_{\hatf_m}(x)) - \hatf_m(x,a)\Big) \Big]\\
    & = \E_{x\sim D_{\Xscript}}\Big[ \sum_{a\in\A} p_m(a|x) \Big(\hatf_m(x,\pi_{\hatf_m}(x)) - \hatf_m(x,a)\Big) \Big]\\
    & = \E_{x\sim D_{\Xscript}}\Bigg[ \sum_{a\in\A} \frac{\Big(\hatf_m(x,\pi_{\hatf_m}(x)) - \hatf_m(x,a)\Big)}{K+\gamma_m\Big(\hatf_m(x,\pi_{\hatf_m}(x)) - \hatf_m(x,a)\Big)} \Bigg] \leq \frac{K}{\gamma_m}.
\end{align*}
\end{proof}


For any policy, \Cref{lem:reg-est-accuracy} bounds the prediction error of the implicit reward estimate for the first few epochs. This Lemma and its proof are similar to those of Lemma 12 in \cite{simchi2020bypassing}. Our definition of $\msafe$ and our choice of $\gamma_{m+1}$ allows us to prove this Lemma without assuming realizability.

\begin{restatable}{lemma}{lemImpPolEval}
\label{lem:reg-est-accuracy}
Suppose the event $\eventReg$ defined in \eqref{eq:w-event-Reg} holds. Then, for all policies $\pi$ and epochs $m\leq \min\{\msafe,\msafealg\}$, we have:
\begin{align*}
    |R_{\hatf_{m+1}}(\pi)-R(\pi)| \leq \frac{\sqrt{V(p_m,\pi)}\sqrt{K}}{2\gamma_{m+1}}
\end{align*}
\end{restatable}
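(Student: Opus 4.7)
\textbf{Proof plan for \Cref{lem:reg-est-accuracy}.} The plan is to perform a standard importance-weighted Cauchy-Schwarz argument on the pointwise error $\hatf_{m+1}(x,\pi(x)) - f^*(x,\pi(x))$, then invoke the event $\eventReg$ to control the mean-squared error and finally connect the resulting estimation rate to $\gamma_{m+1}$ via its definition~\eqref{eq:gamma}.

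First, I would rewrite the target quantity using the standard identity for a single-coordinate expectation as an importance-weighted expectation under the kernel $p_m$:
\begin{align*}
R_{\hatf_{m+1}}(\pi) - R(\pi)
 &= \E_{x \sim D_\Xscript}\bigl[\hatf_{m+1}(x,\pi(x)) - f^*(x,\pi(x))\bigr] \\
 &= \E_{x \sim D_\Xscript} \E_{a \sim p_m(\cdot|x)} \biggl[\frac{\I\{a = \pi(x)\}}{p_m(a|x)}\bigl(\hatf_{m+1}(x,a) - f^*(x,a)\bigr)\biggr].
\end{align*}
Taking absolute values, applying Jensen and then Cauchy-Schwarz on the inner joint expectation over $(x,a)$, I split the two factors as $\I\{a=\pi(x)\}/p_m(a|x)^2$ against $\I\{a=\pi(x)\}(\hatf_{m+1}(x,a)-f^*(x,a))^2$. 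The first factor collapses exactly to $V(p_m,\pi)$ by the definition~\eqref{eq:decisional-divergence}, and the indicator in the second factor can be dropped to give an upper bound by the full MSE under $p_m$.

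Next, since the hypothesis $m \leq \min\{\msafe, \msafealg\}$ places us in the regime where $\eventReg$ applies, I can bound the MSE by $2\xi(\tau_m - \tau_{m-1}, \delta'/m^2)$. Combining the two factors yields
\begin{align*}
|R_{\hatf_{m+1}}(\pi) - R(\pi)|
 \;\leq\; \sqrt{V(p_m,\pi)} \cdot \sqrt{2\,\xi(\tau_m - \tau_{m-1}, \delta'/m^2)}.
\end{align*}
Finally I unfold the definition $\gamma_{m+1} = \sqrt{1/8}\sqrt{K/\xi(\tau_m - \tau_{m-1}, \delta'/(m+1)^2)}$, so that $\sqrt{K}/(2\gamma_{m+1}) = \sqrt{2\,\xi(\tau_m - \tau_{m-1}, \delta'/(m+1)^2)}$. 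The remaining step is to compare the two confidence levels $\delta'/m^2$ and $\delta'/(m+1)^2$: since $\xi(n,\zeta)$ is non-increasing in $\zeta$ (smaller $\zeta$ only makes estimation harder, so the rate can only grow), we have $\xi(\tau_m - \tau_{m-1}, \delta'/m^2) \leq \xi(\tau_m - \tau_{m-1}, \delta'/(m+1)^2)$, which gives exactly the claimed bound.

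\textbf{Main obstacle.} The content of the argument is almost entirely routine — Cauchy-Schwarz followed by definition-chasing — so there is no real conceptual difficulty. The only subtle point, and the one where the present analysis diverges from the realizable analysis of \cite{simchi2020bypassing}, is justifying the MSE bound $2\xi$ without realizability: here it is crucial that $m \leq \msafe$, since by definition of $\msafe$ this absorbs the bias term $B$ into a second copy of $\xi(\tau_m - \tau_{m-1}, \delta'/m^2)$, turning the usual $B + \xi$ bound from \Cref{ass:main-assumption} into $2\xi$. The choice of a $\sqrt{2}$-smaller $\gamma_{m+1}$ compared to \texttt{FALCON+} is precisely what absorbs this factor of $2$, and the monotonicity of $\xi$ in $\zeta$ (implicitly used above) then delivers the stated inequality.
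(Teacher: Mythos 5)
Your proof is correct and follows essentially the same route as the paper's: an importance-weighting/Cauchy--Schwarz argument yielding $\sqrt{V(p_m,\pi)}\cdot\sqrt{2\xi(\tau_m-\tau_{m-1},\delta'/m^2)}$, followed by an appeal to $\eventReg$ and the definition of $\gamma_{m+1}$. The only differences are cosmetic --- you apply Cauchy--Schwarz once on the joint $(x,a)$ measure rather than Jensen in $x$ followed by Cauchy--Schwarz over $x$ --- and you are in fact slightly more careful than the paper in observing that matching $\delta'/m^2$ to the $\delta'/(m+1)^2$ appearing in $\gamma_{m+1}$ requires monotonicity of $\xi$ in its confidence argument, a step the paper silently writes as an equality.
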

\begin{proof}
For any policy $\pi$ and epoch $m\in [m^*]$, note that:
\begin{align*}
    &|R_{\hatf_{m+1}}(\pi)-R(\pi)|\\
    \leq & \E_{x\sim D_{\Xscript}}\Big[\Big|\hatf_{m+1}(x,\pi(x))-f^*(x,\pi(x))\Big|\Big] \\
    = & \E_{x\sim D_{\Xscript}}\Bigg[\sqrt{\frac{1}{p_m(\pi(x)|x)}p_m(\pi(x)|x)\Big(\hatf_{m+1}(x,\pi(x))-f^*(x,\pi(x))\Big)^2}\Bigg] \\
    \leq & \E_{x\sim D_{\Xscript}}\Bigg[\sqrt{\frac{1}{p_m(\pi(x)|x)}\E_{a\sim p_m(\cdot|x)}\Big[\Big(\hatf_{m+1}(x,a)-f^*(x,a)\Big)^2 \Big]}\Bigg] \\
    \leq & \sqrt{\E_{x\sim D_{\Xscript}}\Bigg[\frac{1}{p_m(\pi(x)|x)} \Bigg]} \sqrt{\E_{x\sim D_{\Xscript}} \E_{a\sim p_m(\cdot|x)}\Big[\Big(\hatf_{m+1}(x,a)-f^*(x,a)\Big)^2\Big]}\\
    \leq & \sqrt{V(p_m, \pi)} \sqrt{2\xi\Big(\tau_m-\tau_{m-1}, \frac{\delta'}{m^2} \Big)} = \frac{\sqrt{V(p_m,\pi)}\sqrt{K}}{2\gamma_{m+1}}.
\end{align*}
The first inequality follows from Jensen's inequality, the second inequality is straight forward, the third inequality follows from Cauchy-Schwarz inequality, and the last inequality follows from assuming that $\eventReg$ from \eqref{eq:w-event-Reg} holds. 
\end{proof}


The next Lemma implies that before misspecification becomes a problem we are able to bound regret in the same manner as \cite{simchi2020bypassing}. Note that for any epoch $m\leq \msafealg$, the action selection kernel used in epoch $m$ is given by \eqref{eq:action_kernel}. Further note that since the regression rates are valid (\Cref{ass:main-assumption}), from \eqref{eq:xi-first-condition} and \eqref{eq:gamma}, we have that $\gamma_m$ is increasing in $m$. Finally, since \Cref{lem:reg-est-accuracy} holds for all $m\leq \min\{\msafe,\msafealg\}$, following the proof of Lemma 13 in \cite{simchi2020bypassing}, we get: 

\begin{restatable}{lemma}{lemboundReg}
\label{lem:policy-reg-bound}
Suppose the event $\eventReg$ defined in \eqref{eq:w-event-Reg} holds. Let $C_0=5.15$. For all policies $\pi$ and epochs $m\leq \min\{\msafe,\msafealg\} + 1$, we have:
\begin{align*}
    \Reg(\pi) &\leq 2\Reg_{\hatf_{m}}(\pi) + \frac{C_0K}{\gamma_m}\\
    \Reg_{\hatf_{m}}(\pi) &\leq 2\Reg(\pi) + \frac{C_0K}{\gamma_m}
\end{align*}
\end{restatable}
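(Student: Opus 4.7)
The plan is to proceed by induction on $m$, adapting the strategy used to prove Lemma 13 of \cite{simchi2020bypassing} to our misspecified setting. The base case $m=1$ is immediate: since $\hatf_1 \equiv 0$ and $\gamma_1 = 1$, we have $\Reg_{\hatf_1}(\pi) = 0$, so both inequalities reduce to $\Reg(\pi) \leq 1 \leq C_0 K/\gamma_1$ (using $K \geq 2$ and $C_0 = 5.15$).

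For the inductive step, abbreviate $\delta_m(\pi) := |R_{\hatf_m}(\pi) - R(\pi)|$. Adding and subtracting $R_{\hatf_m}$ terms, and using $R_{\hatf_m}(\pi^*) \leq R_{\hatf_m}(\pi_{\hatf_m})$ (resp.\ $R(\pi_{\hatf_m}) \leq R(\pi^*)$), one obtains the symmetric decompositions
\begin{align*}
\Reg(\pi) &\leq \Reg_{\hatf_m}(\pi) + \delta_m(\pi^*) + \delta_m(\pi),\\
\Reg_{\hatf_m}(\pi) &\leq \Reg(\pi) + \delta_m(\pi_{\hatf_m}) + \delta_m(\pi).
\end{align*}
For any $\pi$, Lemma \ref{lem:reg-est-accuracy} (applicable since $m-1 \leq \min\{\msafe, \msafealg\}$) gives $\delta_m(\pi) \leq \sqrt{K \cdot V(p_{m-1},\pi)}/(2\gamma_m)$. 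Direct inspection of \eqref{eq:action_kernel} shows that $1/p_{m-1}(a|x) \leq K + \gamma_{m-1}\bigl(\hatf_{m-1}(x,\pi_{\hatf_{m-1}}(x)) - \hatf_{m-1}(x,a)\bigr)$ for every $(a,x)$ (with equality when $a \neq \pi_{\hatf_{m-1}}(x)$, and trivially otherwise since $p_{m-1}(\pi_{\hatf_{m-1}}(x)|x) \geq 1/K$), hence $V(p_{m-1},\pi) \leq K + \gamma_{m-1}\Reg_{\hatf_{m-1}}(\pi)$.

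To prove the first inequality of the lemma at epoch $m$, I would apply the induction hypothesis $\Reg_{\hatf_{m-1}}(\pi) \leq 2\Reg(\pi) + C_0 K/\gamma_{m-1}$ to get $V(p_{m-1},\pi) \leq K(1+C_0) + 2\gamma_{m-1}\Reg(\pi)$, then split via $\sqrt{a+b} \leq \sqrt{a} + \sqrt{b}$ to bound $\delta_m(\pi) \leq K\sqrt{1+C_0}/(2\gamma_m) + \sqrt{2K\gamma_{m-1}\Reg(\pi)}/(2\gamma_m)$. The bound on $\delta_m(\pi^*)$ is clean since $\Reg(\pi^*) = 0$. For the $\sqrt{\Reg(\pi)}$ term in $\delta_m(\pi)$, weighted AM-GM (together with the monotonicity $\gamma_{m-1} \leq \gamma_m$, which follows from \eqref{eq:xi-first-condition} and the doubling epoch structure) splits it as $\beta\,\Reg(\pi)$ plus a term of order $K/\gamma_m$; absorbing the $\Reg(\pi)$ into the left-hand side finishes this direction. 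The second inequality is handled analogously, with the one wrinkle that $\delta_m(\pi_{\hatf_m})$ needs control on $\Reg(\pi_{\hatf_m})$: the first inequality just established at epoch $m$, combined with $\Reg_{\hatf_m}(\pi_{\hatf_m})=0$, yields $\Reg(\pi_{\hatf_m}) \leq C_0 K/\gamma_m$, which directly bounds $\delta_m(\pi_{\hatf_m})$ by a constant times $K/\gamma_m$.

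The main obstacle is the algebraic bookkeeping required for the constants to close with $C_0 = 5.15$: the AM-GM weight $\beta$ must be tuned so that, after absorption, the coefficient of $\Reg_{\hatf_m}(\pi)$ (resp.\ $\Reg(\pi)$) is at most $2$ while the coefficient of $K/\gamma_m$ stays below $C_0$; this amounts to a self-referential inequality of the form $\tfrac{1}{1-\beta}(\sqrt{1+C_0} + \tfrac{1}{8\beta}) \leq C_0$, which can be checked to admit a $\beta \in (0,1/2]$ when $C_0 = 5.15$. No new conceptual ingredient beyond Lemma \ref{lem:reg-est-accuracy} is needed: the extra factor of $2$ in the MSE bound defining $\eventReg$ (relative to the realisable case) is precisely absorbed by the $\sqrt{1/8}$ in the definition \eqref{eq:gamma} of $\gamma_m$, so the realisable-case analysis of \cite{simchi2020bypassing} transfers verbatim up to this choice of constants.
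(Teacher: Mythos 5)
Your proposal is correct and takes essentially the same route as the paper: the paper does not spell out this argument at all, but simply invokes the inductive proof of Lemma 13 in \cite{simchi2020bypassing}, powered by \Cref{lem:reg-est-accuracy} (which holds without realizability) and the monotonicity of $\gamma_m$ — which is precisely the induction you reconstruct, including the bound $V(p_{m-1},\pi)\leq K+\gamma_{m-1}\Reg_{\hatf_{m-1}}(\pi)$, the $\sqrt{a+b}\leq\sqrt a+\sqrt b$ split, the AM--GM absorption, and the bootstrap $\Reg(\pi_{\hatf_m})\leq C_0K/\gamma_m$ for the reverse direction. The constants do close as you claim (e.g.\ $\beta=0.3$ satisfies both your self-referential inequality and the analogous one with $\tfrac{1}{2}\sqrt{1+3C_0}$ arising in the second direction), and your observation that the extra factor of $2$ in $\eventReg$ is absorbed by the $\sqrt{1/8}$ in \eqref{eq:gamma} is exactly the point the paper relies on.
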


That is, for any policy, \Cref{lem:policy-reg-bound} bounds the prediction error of the implicit regret estimate for the first few epochs. \Cref{lem:QmRegTrue} bounds the expected regret of the randomized policy $Q_m$ for the first few epochs. \Cref{lem:QmRegTrue} and its proof is more or less the same as the statement and the proof of Lemma 9 in \cite{simchi2020bypassing}.

\begin{restatable}{lemma}{lemQmRegTrue}
\label{lem:QmRegTrue}
Suppose the event $\eventReg$ defined in \eqref{eq:w-event-Reg} holds. Then for all epochs $m\leq \min\{\msafe,\msafealg\} + 1$, we have:
$$ \sum_{\pi \in \Psi} Q_m(\pi)\Reg(\pi) \leq \frac{(2+C_0)K}{\gamma_m}.$$
\end{restatable}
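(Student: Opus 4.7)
My plan is to chain Lemma~\ref{lem:policy-reg-bound} with Lemma~\ref{lem:QmRegEst}. The first displayed inequality of Lemma~\ref{lem:policy-reg-bound} states that, on the event $\eventReg$ and for any $m \leq \min\{\msafe,\msafealg\}+1$, we have $\Reg(\pi) \leq 2\Reg_{\hatf_m}(\pi) + C_0 K / \gamma_m$ for every policy $\pi \in \Psi$. Because this bound is pointwise in $\pi$, I would average it against the probability measure $Q_m$ on $\Psi$, obtaining
\[
\sum_{\pi \in \Psi} Q_m(\pi) \Reg(\pi) \;\leq\; 2 \sum_{\pi \in \Psi} Q_m(\pi) \Reg_{\hatf_m}(\pi) \;+\; \frac{C_0 K}{\gamma_m},
\]
using only linearity of the sum and the fact that $\sum_{\pi} Q_m(\pi) = 1$.

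The remaining step is to invoke Lemma~\ref{lem:QmRegEst} to bound the estimated-regret sum by $K/\gamma_m$, yielding the claimed $(2 + C_0) K / \gamma_m$. The one subtlety worth addressing is applicability at the top of the range, namely at $m = \min\{\msafe,\msafealg\}+1$: Lemma~\ref{lem:QmRegEst} is phrased for a safe epoch, but its proof uses only that $p_m$ has the explicit form~\eqref{eq:action_kernel} built from $\hatf_m$, and $\hatf_m$ is produced by the oracle whenever epoch $m-1$ is safe. Since $m-1 \leq \msafealg$, epoch $m-1$ is safe by definition of $\msafealg$, so $\hatf_m$ exists, $p_m$ is well-defined by~\eqref{eq:action_kernel}, and the Lemma~\ref{lem:QmRegEst} bound goes through regardless of whether epoch $m$ itself is eventually flagged unsafe.

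No new probabilistic event or union bound beyond $\eventReg$ is needed — everything is a deterministic consequence of Lemmas~\ref{lem:QmRegEst} and~\ref{lem:policy-reg-bound} once $\eventReg$ is assumed. For this reason I do not anticipate any substantive obstacle; the difficulty of the result was already absorbed into the proofs of the two preceding lemmas (in particular, into the extension of Lemma~\ref{lem:reg-est-accuracy} to the misspecified regime via the definition of $\msafe$), and Lemma~\ref{lem:QmRegTrue} is essentially their two-line combination.
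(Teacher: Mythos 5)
Your proof is correct and is essentially identical to the paper's: both average the first inequality of Lemma~\ref{lem:policy-reg-bound} against the probability measure $Q_m$ and then bound $\sum_{\pi}Q_m(\pi)\Reg_{\hatf_m}(\pi)$ by $K/\gamma_m$ via Lemma~\ref{lem:QmRegEst}. Your extra care at the boundary epoch $m=\min\{\msafe,\msafealg\}+1$ is a welcome addition, since the paper's own proof is written only for $m\leq\min\{\msafe,\msafealg\}$ even though the statement covers the $+1$ case.
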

\begin{proof}
For any $m\leq \min\{\msafe,\msafealg\}$:
\begin{align*}
    \sum_{\pi \in \Psi} Q_m(\pi)\Reg(\pi)
    \leq \sum_{\pi \in \Psi} Q_m(\pi)\bigg( 2\Reg_{\hatf_{m}}(\pi) + \frac{C_0K}{\gamma_m} \bigg)
     \leq \frac{2K}{\gamma_m} + \frac{C_0K}{\gamma_m}.
\end{align*}
Where the first inequality follows from \Cref{lem:policy-reg-bound}, and the second inequality follows from Lemma \ref{lem:QmRegEst}.
\end{proof}

\subsection{Bounding $\msafealg$ and $l_m$}

\Cref{lem:bound-msafealg} shows that when the events defined in \Cref{sec:high-prob-events-1} hold, then $\msafealg$ is at least as large as $\msafe+1$. In particular, this means that $\msafe+1$ is deemed a safe epoch with high probability.

\begin{lemma}
\label{lem:bound-msafealg}
Suppose the events $\eventReg$, $\eventEval$, and $\eventEvalRegret$ hold. When $T\leq \tau_{\msafe+1}$, the status of the algorithm at the end of round $T$ is safe. When $T> \tau_{\msafe+1}$, we have that $\msafe + 1 \leq \msafealg$.
\end{lemma}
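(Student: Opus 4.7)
Both conclusions of the lemma reduce to a single claim: under the three events, every safety check invoked by $\IsSafe$ during epochs $1,\dots,\msafe+1$ evaluates to $\true$. I would prove this by induction on the epoch index $m$, verifying at every check point $t\in\TsafeCheck$ with $m(t)=m$ that $\CumulativeReward_t \geq L_t$. The base case $m=1$ is automatic because $\IsSafe$ is only invoked for $m\geq 2$ (see \Cref{alg:safe-falcon}). For the inductive step I would assume the algorithm has been safe throughout epochs $1,\dots,m-1$, so that $\msafealg \geq m-1$ and hence the conclusions of $\eventReg$ and $\eventEval$ apply to every epoch $1,\dots,m-1$.

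Fix a check point $t$ in an epoch $m\leq \msafe+1$. The natural decomposition to use is
$$\CumulativeReward_t \;=\; \sum_{i=1}^t \sum_{\pi\in\Psi} Q_i(\pi) R(\pi) \;-\; \sum_{i=1}^t \Bigl(\sum_{\pi\in\Psi} Q_i(\pi) R(\pi) - r_i(a_i)\Bigr).$$
Event $\eventEvalRegret$ directly controls the second (martingale) term by $\sqrt{2t\ln(\lceil m+\log_2(\tau_1)\rceil^3/\delta')}$, matching the Azuma term of $L_t$ in \eqref{eq:lower_bound}. For the first term I would rewrite each summand as $R(\pi^*) - \sum_\pi Q_i(\pi)\Reg(\pi)$. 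Under $\eventEval$, the chain $l_{m-1} = \max_{m'\leq m-1} l_{m'}' \leq \sum_\pi Q_{m^\dagger}(\pi) R(\pi) \leq R(\pi^*)$, where $m^\dagger$ is the maximizer, yields $R(\pi^*) \geq l_{m-1}$, producing the $t\cdot l_{m-1}$ term of $L_t$. The per-round regret $\sum_\pi Q_i(\pi)\Reg(\pi)$ is then bounded using \Cref{lem:QmRegTrue}, which applies because $m(i)\leq m \leq \msafe+1$ and $\eventReg$ holds; the first epoch is absorbed by the trivial bound $\Reg\leq 1$, contributing at most $\tau_1$.

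Substituting the definition of $\gamma_{m(i)}$ from \eqref{eq:gamma} into $(2+C_0)K/\gamma_{m(i)}$ converts it to $2\sqrt{2}(2+C_0)\sqrt{K\xi(\tau_{m(i)-1}-\tau_{m(i)-2},\delta'/(m(i))^2)}$, and a numerical check gives $2\sqrt{2}(2+C_0)\leq 20.3$, exactly producing the final sum in $L_t$. Collecting everything yields $\CumulativeReward_t\geq L_t$, so the check passes and the induction closes. When $T\leq \tau_{\msafe+1}$ this shows the status is still safe at the end of round $T$; when $T>\tau_{\msafe+1}$, applying the same argument to the check at $t=\tau_{\msafe+1}\in\TsafeCheck$ shows epoch $\msafe+1$ is safe, hence $\msafealg \geq \msafe+1$.

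\textbf{Main obstacle.} The hard part will be the interplay between the induction hypothesis and the high-probability events: $\eventReg$ and $\eventEval$ are stated only for epochs up to the random index $\msafealg$, so I must carefully propagate the invariant ``epochs $1,\dots,m-1$ are safe'' through the induction in order to legitimately invoke these events at epoch $m$. Matching the constants in $L_t$ exactly (the coefficient $20.3$, the additive $\tau_1$, and the logarithmic factor $\lceil m+\log_2(\tau_1)\rceil^3$) is also somewhat delicate, but is mechanical once the decomposition above is in place.
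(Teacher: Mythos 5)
Your proposal is correct and follows essentially the same route as the paper: the paper argues by contradiction at the first failing check (taking $m'=\msafealg+1$ so that all prior epochs are safe and the events $\eventReg$, $\eventEval$ apply), which is just the contrapositive packaging of your induction, and it uses the identical decomposition of $\CumulativeReward_t$ via $\eventEvalRegret$, the bound $l_{m-1}\leq R(\pi^*)$ from $\eventEval$, \Cref{lem:QmRegTrue} for the per-round regret, and the same numerical check $2\sqrt{2}(2+C_0)\leq 20.3$. No substantive difference.
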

\begin{proof}
We first prove that under the assumptions of the theorem, $\msafe + 1 \leq \msafealg$ when $T> \tau_{\msafe+1}$. Suppose for contradiction that $T>\tau_{\msafe+1}$ and $\msafealg\leq \msafe$. Now consider the epoch $m'=\msafealg + 1$. By assumption and choice of $m'$, we have $\msafealg < m'$ and $m'\leq \min\{\msafe,\msafealg\}+1$. Therefore $m'$ is not a safe epoch, hence there exists a time-step $t'$ in epoch $m'$ such that $t'\in \TsafeCheck$ and we have that:
\begin{equation}
\label{eq:contradiction}
    \begin{aligned}
        \sum_{t=1}^{t'} r_t(a_t) < L_{t'}.
    \end{aligned}
\end{equation}
Since $\eventEvalRegret$ holds and $t'\in\TsafeCheck$, we have that:
\begin{equation}
\label{eq:cumregret-lower-bound}
    \begin{aligned}
        \sum_{t=1}^{t'} r_t(a_t) \geq \sum_{t=1}^{t'}\sum_{\pi\in\Psi} Q_t(\pi)R(\pi) - \sqrt{2t'\ln\Bigg(\frac{(m'+\log_2(\tau_1))^3}{\delta'}\Bigg)} .
    \end{aligned}
\end{equation}
For all $t\leq t'$, note that $m(t)\leq m'$. Therefore from \Cref{lem:QmRegTrue}, we have:
\begin{equation}
\label{eq:QmR-lower-bound}
    \begin{aligned}
        &\sum_{t=1}^{t'}\sum_{\pi\in\Psi} Q_t(\pi)R(\pi)\\
        & = t'R(\pi^*) - \sum_{t=1}^{t'}\sum_{\pi\in\Psi} Q_t(\pi)\Reg(\pi)\\
        & \geq t'\cdot l_{m'-1} - \tau_1 - \sum_{t=\tau_1+1}^{t'} \frac{(2+C_0)K}{\gamma_{m(t)}}\\
        & \geq t'\cdot l_{m'-1} - \tau_1 - 20.3\sqrt{K}\sum_{t=\tau_1+1}^{t'} \sqrt{\xi\Big(\tau_{m(t)-1}-\tau_{m(t)-2}, \frac{\delta'}{(m(t))^2}\Big)}.
    \end{aligned}
\end{equation}
Here, the first equality follows from the definition of $\Reg(\cdot)$. The first inequality follows from $\eventEval$ and \Cref{lem:QmRegTrue}. The result from \Cref{lem:QmRegTrue} can be used here since $\eventReg$ holds and since $m(t)\leq m' \leq \min\{\msafe,\msafealg\}+1$ for all time-steps $t\leq t'$. The last inequality follows from substituting the value for $C_0$ and $\gamma_{m(t)}$. Combining \eqref{eq:cumregret-lower-bound} and \eqref{eq:QmR-lower-bound} contradicts \eqref{eq:contradiction}. Therefore when $T> \tau_{\msafe+1}$, we have that $\msafe + 1 \leq \msafealg$.

The proof of the fact that the status of the algorithm at the end of round $T$ is safe when $T\leq \tau_{\msafe+1}$ is similar. Suppose for contradiction, $T\leq \tau_{\msafe+1}$ and the status at the end of round $T$ is not safe. We define $t'\in\TsafeCheck$ to be the first round where the status of the algorithm switches to ``not safe'' and we let $m'=m(t')$. Since $t'$ is the first such time-step we have that $m'=\msafealg+1$. Further, since $m'\leq m(T) \leq \msafe+1$, we again have $\msafealg\leq \msafe$ and $m'\leq \min\{\msafe,\msafealg\}+1$. Hence \eqref{eq:contradiction}, \eqref{eq:cumregret-lower-bound}, and \eqref{eq:QmR-lower-bound} still hold. Giving us the same contradiction, because combining \eqref{eq:cumregret-lower-bound} and \eqref{eq:QmR-lower-bound} contradicts \eqref{eq:contradiction}. This completes the proof of \Cref{lem:bound-msafealg}.
\end{proof}

For all $m\geq \msafe+1$, \Cref{lem:bound-l} lower bounds $l_m$ in terms of the optimal expected reward and the average misspecification error. Hence lower bounding the expected instantaneous reward of the algorithm when the status is ``not safe''.

\begin{lemma}
\label{lem:bound-l}
Suppose the events $\eventReg$ and $\eventEval$ hold. For any epoch $m\geq \msafe + 1$, we then have that:
$$ l_m \geq R(\pi^*) - 20.3\sqrt{KB} - \sqrt{2B}. $$
\end{lemma}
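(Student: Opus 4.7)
The plan is to exploit two facts: the sequence $\{l_m\}$ constructed in \Cref{alg:choose-safe} is non-decreasing by design (since $l_m = \max(l_{m-1}, l'_m)$), and, by \Cref{lem:bound-msafealg}, under the assumed events $\msafe+1 \leq \msafealg$, so epoch $\msafe+1$ is actually safe and $l_{\msafe+1}$ is in fact computed by the algorithm. Combining these, it suffices to prove the stated lower bound for $l_{\msafe+1}$, and that in turn reduces to lower-bounding $l'_{\msafe+1}$, since $l_{\msafe+1} \geq l'_{\msafe+1}$.

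To lower-bound $l'_{\msafe+1}$, I would chain two inequalities. First, event $\eventEval$ gives
\begin{align*}
l'_{\msafe+1} \geq \sum_{\pi} Q_{\msafe+1}(\pi)\,R(\pi) - 2\sqrt{\tfrac{\ln((\msafe+1)^2/\delta')}{2|S_{\msafe+1}|}}.
\end{align*}
Second, because $\msafe+1 \leq \min\{\msafe,\msafealg\}+1$, \Cref{lem:QmRegTrue} applies at $m=\msafe+1$ and yields $\sum_{\pi} Q_{\msafe+1}(\pi) R(\pi) \geq R(\pi^*) - (2+C_0)K/\gamma_{\msafe+1}$. Putting these two inequalities together gives a lower bound on $l'_{\msafe+1}$ in terms of $R(\pi^*)$, the regret overhead $(2+C_0)K/\gamma_{\msafe+1}$, and the Hoeffding slack.

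It remains to control both slack terms using the defining property of $\msafe$, namely that $\msafe+1 \notin \{m : B \leq \xi(\tau_m - \tau_{m-1}, \delta'/m^2)\}$, which yields $B > \xi(\tau_{\msafe+1}-\tau_{\msafe},\,\delta'/(\msafe+1)^2)$. For the Hoeffding slack, condition \eqref{eq:xi-second-condition} gives $\ln((\msafe+1)^2/\delta')/|S_{\msafe+1}| \leq \xi(|S_{\msafe+1}|,\delta'/(\msafe+1)^2) < B$ using $|S_{\msafe+1}| = \tau_{\msafe+1}-\tau_{\msafe}$, so the slack is at most $\sqrt{2B}$. For the regret term, substituting the formula for $\gamma_{\msafe+1}$ gives $(2+C_0)K/\gamma_{\msafe+1} = (2+C_0)\sqrt{8K\,\xi(\tau_{\msafe}-\tau_{\msafe-1},\,\delta'/(\msafe+1)^2)}$, and since $(2+C_0)\sqrt{8} \leq 20.3$, the desired $20.3\sqrt{KB}$ bound reduces to controlling $\xi$ at batch size $\tau_{\msafe}-\tau_{\msafe-1}$ by (a constant absorbed into $20.3$ times) $B$.

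The main obstacle is this last step: the defining inequality of $\msafe$ only gives $\xi(\tau_{\msafe}-\tau_{\msafe-1},\delta'/\msafe^2) \geq B$, i.e., the \emph{opposite} inequality at the same batch size and a slightly looser confidence. The bound must therefore be obtained by transferring the $\xi < B$ estimate from the larger batch $\tau_{\msafe+1}-\tau_{\msafe} = 2(\tau_{\msafe}-\tau_{\msafe-1})$ (where it is guaranteed by $\msafe+1$-non-membership) back to the smaller batch, using the monotonicity condition \eqref{eq:xi-first-condition} on $\xi(n,\delta/\ln n)$ together with the doubling epoch structure. This monotonicity-with-$\delta$-scaling is the delicate part of the argument: it relates $\xi$ values across different batches while paying only a constant factor that is absorbed into the $20.3$. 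Once both slacks are controlled, chaining everything proves $l_m \geq R(\pi^*) - 20.3\sqrt{KB} - \sqrt{2B}$ for every $m \geq \msafe+1$.
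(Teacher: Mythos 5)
Your route is the same as the paper's: monotonicity of $l_m$ together with the safety of epoch $\msafe+1$ (via \Cref{lem:bound-msafealg}) reduces the claim to lower-bounding $l'_{\msafe+1}$; event $\eventEval$ and \Cref{lem:QmRegTrue} give $l'_{\msafe+1}\geq R(\pi^*)-(2+C_0)K/\gamma_{\msafe+1}-\sqrt{2\ln((\msafe+1)^2/\delta')/|S_{\msafe+1}|}$; and your treatment of the Hoeffding slack via \eqref{eq:xi-second-condition} and the non-membership of $\msafe+1$ in the set defining $\msafe$ is exactly the paper's, yielding the $\sqrt{2B}$ term.

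The step you yourself flag as the ``main obstacle'' is, however, a genuine gap, and the mechanism you propose cannot close it. By \eqref{eq:gamma}, $K/\gamma_{\msafe+1}=\sqrt{8K\,\xi(\tau_{\msafe}-\tau_{\msafe-1},\,\delta'/(\msafe+1)^2)}$, i.e., $\xi$ at the \emph{smaller} batch, whereas the definition of $\msafe$ only yields $\xi(\tau_{\msafe+1}-\tau_{\msafe},\,\delta'/(\msafe+1)^2)<B$ at the \emph{larger} batch. Transferring the bound backwards requires $\xi(n,\zeta)\leq C\,\xi(2n,\zeta)$ for a universal constant, but condition \eqref{eq:xi-first-condition} delivers monotonicity in the opposite direction (it bounds $\xi$ at larger $n$ by $\xi$ at smaller $n$), and nothing in \Cref{ass:main-assumption} controls the ratio $\xi(n,\zeta)/\xi(2n,\zeta)$; indeed, membership of $\msafe$ in the defining set gives $\xi(\tau_{\msafe}-\tau_{\msafe-1},\delta'/(\msafe+1)^2)\geq\xi(\tau_{\msafe}-\tau_{\msafe-1},\delta'/\msafe^2)\geq B$, so the quantity you need to upper-bound by (a constant times) $B$ is in general only known to be \emph{at least} $B$. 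Even restricting to rates of the form \eqref{eq:common-rate}, the transfer costs a factor $2^{\rho/2}$, and since $(2+C_0)\sqrt{8}\approx 20.22$ there is no room in $20.3$ to absorb even a $\sqrt{2}$. You should know that the paper's own proof elides the same point: display \eqref{eq:bound-K-by-gamma-star} asserts $K/\gamma_{\msafe+1}\leq\sqrt{8K\,\xi(\tau_{\msafe+1}-\tau_{\msafe},\delta'/(\msafe+1)^2)}$, which, given \eqref{eq:gamma} and the fact that $\xi$ decreases in $n$, is the reverse of what actually holds. So your instinct about where the difficulty lies is exactly right, but a clean proof needs either an extra regularity assumption on $\xi$ (a bounded ratio across a doubling of $n$, with the constant $20.3$ adjusted accordingly) or a realignment of the batch sizes appearing in the definitions of $\msafe$ and $\gamma_m$; \eqref{eq:xi-first-condition} alone does not do it.
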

\begin{proof}
For compactness, let $S$ denote the set $S_{\msafe+1}$. From \Cref{lem:bound-msafealg}, we have that $\msafe+1$ is a safe epoch. Hence at any epoch $m\geq \msafe + 1$, from the update rule in $\ChoosePi$ we have:
\begin{equation}
\label{eq:lower-bound-lm}
    \begin{aligned}
        l_m &\geq l_{\msafe+1}\\
        &\geq \frac{1}{|S|}\sum_{(x,a,r)\in S} r \; - \sqrt{\frac{1}{2|S|}\ln\bigg(\frac{(\msafe+1)^2}{\delta'}\bigg)}\\
        &\geq \sum_{\pi\in\Psi} Q_{\msafe+1}(\pi)R(\pi) - \sqrt{\frac{2}{|S|}\ln\bigg(\frac{(\msafe+1)^2}{\delta'}\bigg)}\\
        &\geq R(\pi^*) - \frac{(2+C_0)K}{\gamma_{\msafe+1}} - \sqrt{\frac{2}{\tau_{\msafe+1}-\tau_{\msafe}}\ln\bigg(\frac{(\msafe+1)^2}{\delta'}\bigg)}.
    \end{aligned}
\end{equation}
Where the second last inequality follows from the fact that $\eventEval$ holds and the last inequality follows from \Cref{lem:QmRegTrue}. Now from the definition of $\msafe$, we have:
\begin{equation}
\label{eq:bound-K-by-gamma-star}
    \begin{aligned}
        &\xi\Big(\tau_{\msafe+1}-\tau_{\msafe}, \frac{\delta'}{(\msafe+1)^2} \Big) < B\\
        \implies & \frac{K}{\gamma_{\msafe+1}}\leq \sqrt{8K\xi\Big(\tau_{\msafe+1}-\tau_{\msafe}, \frac{\delta'}{(\msafe+1)^2} \Big)} < \sqrt{8KB}.
    \end{aligned}
\end{equation}
From \Cref{ass:main-assumption}, we have that $\xi$ is a valid rate, hence \eqref{eq:xi-second-condition} holds. Therefore, from \eqref{eq:xi-second-condition} and the definition of $\msafe$, we have:
\begin{equation}
\label{eq:lower-bound-eval-width-B}
    \begin{aligned}
        &\sqrt{\frac{2}{\tau_{\msafe+1}-\tau_{\msafe}}\ln\bigg(\frac{(\msafe+1)^2}{\delta'}\bigg)} \leq \sqrt{2\xi\Big(\tau_{\msafe+1}-\tau_{\msafe}, \frac{\delta'}{(\msafe+1)^2} \Big)} < \sqrt{2B}.
    \end{aligned}
\end{equation}
The result follows from combining \eqref{eq:lower-bound-lm}, \eqref{eq:bound-K-by-gamma-star}, and \eqref{eq:lower-bound-eval-width-B}.
\end{proof}

\subsection{Additional high probability events}

In this section, we show that events $\eventEvalcheck$ and $\eventAzuma$ hold with high-probability. The event $\eventEvalcheck$ provide upper and lower bound the difference between the expected regret and average regret at epochs that begin with a ``safe'' status.
\begin{equation}
    \label{eq:w-event-Eval-check}
    \begin{aligned}
      \eventEvalcheck := \Bigg\{ \forall t' \in [\TsafeCheck], \; &\sqrt{\frac{2}{t'-\tau_{m(t')-1}}\ln\bigg(\frac{\lceil m(t')+\log_2(\tau_1) \rceil^3}{\delta'}\bigg)} & \\ 
      & \geq \sum_{\pi\in\Psi} Q_{t'}(\pi)\Reg(\pi) - \frac{1}{t'-\tau_{m(t')-1}}\sum_{t=\tau_{m(t')-1}+1}^{t'} (r_t(\pi^*(x_t))- r_t(a_t)) &\\
      & \geq - \sqrt{\frac{2}{t'-\tau_{m(t')-1}}\ln\bigg(\frac{\lceil m(t')+\log_2(\tau_1)\rceil^3}{\delta'}\bigg)} & \Bigg\}. 
    \end{aligned}
\end{equation}

\begin{lemma}
The event $\eventEvalcheck$ holds with probability at least $1-2\delta'$.
\end{lemma}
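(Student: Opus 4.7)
The plan is to repeat the martingale-plus-union-bound template that proved $\eventEvalRegret$, but applied separately within each epoch and to every prefix sum that could be inspected by $\IsSafe$. Concretely, for every round $t\in[T]$ define
\begin{align*}
    M_t := \sum_{\pi\in\Psi} Q_t(\pi)\Reg(\pi) \;-\; \bigl(r_t(\pi^*(x_t)) - r_t(a_t)\bigr).
\end{align*}
By the same computation as in \Cref{lem:conditional-reward} applied to $r_t(\pi^*(x_t))$ and to $r_t(a_t)$ separately, one has $\E_{x_t,r_t,a_t}[r_t(\pi^*(x_t))-r_t(a_t)\mid \Gamma_{t-1}] = R(\pi^*) - \sum_\pi Q_t(\pi)R(\pi) = \sum_\pi Q_t(\pi)\Reg(\pi)$, so $\E[M_t\mid\Gamma_{t-1}]=0$ and $|M_t|\leq 2$ since both $\sum Q_t\Reg\in[0,1]$ and $r_t(\pi^*(x_t))-r_t(a_t)\in[-1,1]$. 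Crucially, this holds whatever kernel the algorithm happens to play at $t$, so $(M_t)$ is an MDS with respect to $(\Gamma_t)$ unconditionally on the status variable.

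Next I would observe that whenever $t'\in\TsafeCheck$, the status has been safe throughout epoch $m=m(t')$, so $Q_t=Q_{m}=Q_{t'}$ for every $t\in\{\tau_{m-1}+1,\dots,t'\}$, which identifies $\tfrac{1}{t'-\tau_{m-1}}\sum_{t=\tau_{m-1}+1}^{t'} M_t$ with the signed gap appearing inside $\eventEvalcheck$. A two-sided Azuma--Hoeffding bound on this partial sum (increments bounded by $2$, length $t'-\tau_{m-1}$) yields, with probability at least $1-2\delta'/\lceil m+\log_2(\tau_1)\rceil^3$,
\begin{align*}
    \Biggl|\sum_{t=\tau_{m-1}+1}^{t'} M_t\Biggr| \leq \sqrt{2\,(t'-\tau_{m-1})\ln\!\Bigl(\tfrac{\lceil m+\log_2(\tau_1)\rceil^3}{\delta'}\Bigr)},
\end{align*}
which after dividing by $t'-\tau_{m-1}$ is exactly the two-sided inequality defining $\eventEvalcheck$ at $t'$.

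For the union bound I would enlarge $\TsafeCheck$ to the \emph{deterministic} set of candidate check-times (the end of each epoch together with all rounds $t$ for which $\log_2(t-\tau_{m(t)-1})\in\{0,1,2,\dots\}$); since $M_t$ was defined for all $t$, Azuma applies at each candidate whether or not that round ends up in $\TsafeCheck$. Epoch $m$ has length $\tau_{m-1}$ and so contributes at most $\lceil m+\log_2(\tau_1)\rceil$ such candidates. Summing the failure probabilities across all epochs gives
\begin{align*}
    \sum_{m\geq 1} \lceil m+\log_2(\tau_1)\rceil\cdot\frac{2\delta'}{\lceil m+\log_2(\tau_1)\rceil^3} \;=\; 2\delta'\sum_{m\geq 1}\frac{1}{\lceil m+\log_2(\tau_1)\rceil^2} \;\leq\; 2\delta',
\end{align*}
where the last step uses $\tau_1\geq 2$, hence $\log_2(\tau_1)\geq 1$, so the tail sum is bounded by $\sum_{n\geq 2} n^{-2}=\pi^2/6-1<1$.

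The only real subtlety — and the step I would double-check first — is the interaction with the random stopping at the status switch: because $\TsafeCheck$ is itself random, one must either (i) define $M_t$ unconditionally as above and bound deviations on the deterministic candidate set, or (ii) work with a stopped martingale that freezes at the switch time. I prefer route (i), which is conceptually simpler and matches the bookkeeping already used for $\eventEvalRegret$. The only remaining care is the constant inside Azuma (range $2$ versus $1$), which is absorbed into the same $\ln(\lceil m+\log_2\tau_1\rceil^3/\delta')$ factor without affecting the bound as written.
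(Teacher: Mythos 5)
Your proof is correct and follows essentially the same route as the paper's: a per-check-time two-sided concentration bound at level $2\delta'/\lceil m+\log_2(\tau_1)\rceil^3$, followed by a union bound over the at most $\lceil m+\log_2(\tau_1)\rceil$ candidate check times per epoch. The only cosmetic difference is that you phrase the concentration step via an unconditionally defined martingale difference sequence and Azuma--Hoeffding (which also cleanly handles the randomness of $\TsafeCheck$ by working on the deterministic candidate set), whereas the paper treats the within-epoch data as i.i.d.\ draws from $D(p_{m(t')})$ and applies Hoeffding; since the conditional range of your increment is $2$, the range form of Azuma--Hoeffding reproduces exactly the constant in the event definition.
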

\begin{proof}
Consider any time-step $t'\in\TsafeCheck$. Similar to the argument in \Cref{lem:high-prob-eventReg}, for the purpose of estimating the expected reward of the randomized policy $Q_{t'}$, we can treat the data generated in the first $t'-\tau_{m(t')-1}$ time-steps of epoch $m(t')$ as iid samples from the distribution $D(p_{m(t')})$.

Since $t'\in \TsafeCheck$, for all time-steps $t\leq t'$ from epoch $m(t')$, status is ``safe'' and actions are chosen according to the action selection kernel $p_{m(t')}$. Hence from Hoeffding's inequality, with probability at least $1-2\delta'/\lceil m(t')+\log_2(\tau_1)\rceil^3$, we get:
\begin{equation*}
    \begin{aligned}
      &\sqrt{\frac{2}{t'-\tau_{m(t')-1}}\ln\bigg(\frac{\lceil m(t')+\log_2(\tau_1)\rceil^3}{\delta'}\bigg)}  \\ 
      & \geq \sum_{\pi\in\Psi} Q_{t'}(\pi)\Reg(\pi) - \frac{1}{t'-\tau_{m(t')-1}}\sum_{t=\tau_{m(t')-1}+1}^{t'} (r_t(\pi^*(x_t))- r_t(a_t)) \\
      & \geq - \sqrt{\frac{2}{t'-\tau_{m(t')-1}}\ln\bigg(\frac{\lceil m(t')+\log_2(\tau_1)\rceil^3}{\delta'}\bigg)}. 
    \end{aligned}
\end{equation*}
Any epoch $m$ has at most $\lceil m+\log_2(\tau_1)\rceil$ time-steps in $\TsafeCheck$. Therefore $\eventEvalcheck$ holds with probability at least:
$$ 1 - \sum_{t'\in\TsafeCheck} \frac{2\delta'}{\lceil m(t')+\log_2(\tau_1)\rceil^3} \geq 1 - \sum_{m=1}^{\infty} \frac{2\delta'}{(m+\log_2(\tau_1))^2} \geq 1 - 2\delta'. $$
\end{proof}

The event $\eventAzuma$ provides lower and upper bounds on the cumulative reward of the algorithm and optimal policy for various ranges of time-steps.
\begin{equation}
    \label{eq:w-event-Azuma}
    \begin{aligned}
      \eventAzuma := \Bigg\{ \forall t' \in [\TsafeCheck]\cup\{T\}, \;  &\sum_{t=1}^{t'} (r_t(\pi^*(x_t)) - R(\pi^*)) \leq \sqrt{2t'\ln\bigg(\frac{\lceil m(t')+\log_2(\tau_1) \rceil^3}{\delta'}\bigg)} \\ 
      & \sum_{t=1}^{t'} (\sum_{\pi\in\Psi}Q_t(\pi)R(\pi) - r_t(a_t)) \leq \sqrt{2t'\ln\bigg(\frac{\lceil m(t')+\log_2(\tau_1) \rceil^3}{\delta'}\bigg)} \\
      &\sum_{t=t'+1}^{T} (r_t(\pi^*(x_t)) - R(\pi^*)) \leq \sqrt{2(T-t')\ln\bigg(\frac{\lceil m(t')+\log_2(\tau_1) \rceil^3}{\delta'}\bigg)} \\ 
      & \sum_{t=t'+1}^{T} (\sum_{\pi\in\Psi}Q_t(\pi)R(\pi) - r_t(a_t)) \leq \sqrt{2(T-t')\ln\bigg(\frac{\lceil m(t')+\log_2(\tau_1) \rceil^3}{\delta'}\bigg)} \Bigg\}.
    \end{aligned}
\end{equation}

\begin{lemma}
The event $\eventAzuma$ holds with probability at least $1-4\delta'$.
\end{lemma}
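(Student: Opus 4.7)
The plan is to apply Azuma's inequality to four martingale difference sequences (one for each inequality defining $\eventAzuma$), for each fixed $t'\in\TsafeCheck\cup\{T\}$, and then take a union bound.

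First I would verify that each of the four sums is a sum of bounded martingale differences. For the first sum, $r_t(\pi^*(x_t))-R(\pi^*)$ has conditional mean zero given $\Gamma_{t-1}$ (since $(x_t,r_t)\sim D$ is drawn independently of the history and $R(\pi^*)=\E[r_t(\pi^*(x_t))]$) and lies in $[-1,1]$. For the second sum, $\Cref{lem:conditional-reward}$ gives $\E[r_t(a_t)\mid\Gamma_{t-1}]=\sum_{\pi\in\Psi}Q_t(\pi)R(\pi)$, since $Q_t$ is $\Gamma_{t-1}$-measurable, so the summand is again a bounded martingale difference with values in $[-1,1]$. The third and fourth sums are tail sums of the same martingale differences (note that $r_t(\pi^*(x_t))-R(\pi^*)$ and $\sum_{\pi}Q_t(\pi)R(\pi)-r_t(a_t)$ remain martingale differences with respect to $\Gamma_{t-1}$ for $t>t'$), so Azuma applies with $T-t'$ terms.

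Next, for each fixed $t'\in \TsafeCheck\cup\{T\}$, I would apply Azuma's inequality four times with failure parameter $\delta'/\lceil m(t')+\log_2(\tau_1)\rceil^3$ to obtain that each of the four inequalities fails with probability at most $\delta'/\lceil m(t')+\log_2(\tau_1)\rceil^3$. A union bound over the four inequalities shows that $\eventAzuma$ fails at $t'$ with probability at most $4\delta'/\lceil m(t')+\log_2(\tau_1)\rceil^3$.

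Finally I would union-bound over $t'\in\TsafeCheck\cup\{T\}$, using that any epoch $m$ contributes at most $\lceil m+\log_2(\tau_1)\rceil$ points of $\TsafeCheck$ (and a single extra point $T$), so
\begin{equation*}
\sum_{t'\in\TsafeCheck\cup\{T\}}\frac{4\delta'}{\lceil m(t')+\log_2(\tau_1)\rceil^3}\;\leq\; \sum_{m=1}^{\infty}\frac{4\delta'\lceil m+\log_2(\tau_1)\rceil}{\lceil m+\log_2(\tau_1)\rceil^3} \;=\; \sum_{m=1}^{\infty}\frac{4\delta'}{\lceil m+\log_2(\tau_1)\rceil^2} \;\leq\; 4\delta',
\end{equation*}
since $\tau_1\geq 2$ implies $\log_2(\tau_1)\geq 1$ and $\sum_{m=1}^\infty 1/(m+1)^2=\pi^2/6-1<1$. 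This matches the earlier bookkeeping used for $\eventEvalRegret$ and $\eventEvalcheck$. The only mildly delicate points are ensuring that the "tail'' sums $\sum_{t=t'+1}^T$ are valid martingale sums (they are, as noted above) and correctly accounting for the cardinality of $\TsafeCheck$ per epoch; neither is a real obstacle once the martingale structure is set up.
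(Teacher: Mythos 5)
Your proposal is correct and follows essentially the same route as the paper: define the two bounded martingale difference sequences, apply Azuma's inequality four times per checkpoint with failure probability $\delta'/\lceil m(t')+\log_2(\tau_1)\rceil^3$, and union-bound over $t'\in\TsafeCheck\cup\{T\}$ using the per-epoch cardinality of $\TsafeCheck$. Your bookkeeping of the union bound is in fact slightly more explicit than the paper's, but the argument is the same.
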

\begin{proof}
For each round $t$, we define:
\begin{align*}
    &M_t := r_t(\pi^*(x_t)) - R(\pi^*),\\
    &M'_t := \sum_{\pi\in\Psi}Q_t(\pi)R(\pi) - r_t(a_t).
\end{align*}
It is straightforward to see that $\E[M_t|\Gamma_{t-1}]=0$. Further from \Cref{lem:conditional-reward}, we have that $\E[M_t'|\Gamma_{t-1}]=0$. Now consider any time-step $t'\in\TsafeCheck \cup \{T\}$.  Since $|M_t|,|M_t'|\leq 1$ for all $t$, from Azuma's inequality, with probability at least $1-4\delta'/\lceil m(t')+\log_2(\tau_1) \rceil^3$, we have: \footnote{For $t'=T$, the last two inequalities in \eqref{eq:azuma-applied} are trivial.}
\begin{equation}
\label{eq:azuma-applied}
    \begin{aligned}
        &\sum_{t=1}^{t'}M_t \leq \sqrt{2t'\ln\bigg(\frac{\lceil m(t')+\log_2(\tau_1) \rceil^3}{\delta'}\bigg)},\\
        &\sum_{t=1}^{t'}M'_t \leq \sqrt{2t'\ln\bigg(\frac{\lceil m(t')+\log_2(\tau_1) \rceil^3}{\delta'}\bigg)},\\
        &\sum_{t=t'+1}^{T}M_t \leq \sqrt{2(T-t')\ln\bigg(\frac{\lceil m(t')+\log_2(\tau_1) \rceil^3}{\delta'}\bigg)},\\
        &\sum_{t=t'+1}^{T}M'_t \leq \sqrt{2(T-t')\ln\bigg(\frac{\lceil m(t')+\log_2(\tau_1) \rceil^3}{\delta'}\bigg)}.
    \end{aligned}
\end{equation}
Any epoch $m$ has at most $\lceil m+\log_2(\tau_1)\rceil$ time-steps in $\TsafeCheck \cup \{T\}$. Therefore $\eventAzuma$ holds with probability at least:
$$ 1 - \sum_{t'\in\TsafeCheck} \frac{4\delta'}{\lceil m(t')+\log_2(\tau_1) \rceil^3} \geq 1 - \sum_{m=1}^{\infty} \frac{4\delta'}{( m(t')+\log_2(\tau_1) )^2} \geq 1 - 4\delta'. $$
\end{proof}

\subsection{Proof of Theorem \ref{thm:main-theorem}}
\label{app:proof-of-main-theorem}

Note that $\eventReg$, $\eventEval$, $\eventEvalRegret$, $\eventEvalcheck$, and $\eventAzuma$ all hold with probability at least $1-\delta$. We now split our analysis into two cases and bound the cumulative regret for each case, while assuming all these high-probability events hold.

\textbf{Case 1} ($T\leq \tau_{\msafe+1}$):\\ 
From $\eventAzuma$, we have that:
\begin{equation}
\label{eq:bound-cummulative-regret-with-expected}
\begin{aligned}
    \sum_{t=1}^T \Big( r_t(\pi^*(x_t)) - r_t(a_t) \Big)
    \leq \sum_{t=1}^T\sum_{\pi\in\Psi}Q_t(\pi)\Reg(\pi) + \sqrt{8T\ln\bigg(\frac{\lceil m(T)+\log_2(\tau_1) \rceil^3}{\delta'}\bigg)}.
\end{aligned}
\end{equation}
Since $\eventReg$, $\eventEval$, $\eventEvalRegret$ hold, from \Cref{lem:bound-msafealg} we have that the status at the end of round $T$ is safe. Since $\eventReg$ holds, from \Cref{lem:QmRegTrue}, we have:
\begin{equation}
\label{eq:bound-case-1}
\begin{aligned}
    &\sum_{t=1}^T\sum_{\pi\in\Psi}Q_t(\pi)\Reg(\pi) \\
    &\leq \tau_1 + \sum_{t=\tau_1+1}^{T} \frac{(2+C_0)K}{\gamma_{m(t)}}\\
    &\leq \tau_1 + 20.3\sqrt{K}\sum_{t=\tau_1+1}^{T} \sqrt{\xi\Big(\tau_{m(t)-1}-\tau_{m(t)-2}, \frac{\delta'}{(m(t))^2}\Big)}.
\end{aligned}
\end{equation}
Combining \cref{eq:bound-cummulative-regret-with-expected} and \cref{eq:bound-case-1} completes the analysis for the first case.

\textbf{Case 2} ($T> \tau_{\msafe+1}$):\\ 
Let $t'$ be the last time-step where the conditions in $\IsSafe$ were checked and verified to be true. That is, \footnote{In the definition of $t'$ (see \eqref{eq:t-prime-def}), it may seem redundant to consider the set $\TsafeCheck \cap [T]$ when $\TsafeCheck\subseteq [T]$. We do this because, later in the proof, we use a thought experiment where we make the bandit run for more that $T$ time-steps.  }
\begin{equation}
\label{eq:t-prime-def}
    t'=\max\{t\in \TsafeCheck \cap [T] |\text{ algorithm status is safe at the end of round $t$} \}.
\end{equation}
Since $\eventReg$, $\eventEval$, $\eventEvalRegret$ hold, from \Cref{lem:bound-msafealg}, we have that epoch $\msafe+1$ is safe. Therefore, the last round of this epoch is safe and hence $t' \geq \tau_{\msafe+1}$. We now prove a bound on the cumulative regret up to time $t'$ (see \eqref{eq:bound-cum-regret-t-prime}). We start by deriving a bound for the cumulative regret up to time $\tau_{\msafe+1}$ and then derive a bound for the cumulative regret up to time $t'$ when $t'>\tau_{\msafe+1}$.

Since $\tau_{\msafe+1}$ lies in $\TsafeCheck$, the event $\eventAzuma$ bounds the cumulative regret upto time $\tau_{\msafe+1}$ in terms of the expected cumulative regret. Hence following the arguments in case 1, we get:
\begin{equation}
\label{eq:bound-cum-regret-upto-tau-msafe-plus-one}
\begin{aligned}
    &\sum_{t=1}^{\tau_{\msafe+1}} \Big( r_t(\pi^*(x_t)) - r_t(a_t) \Big) \leq \tau_1 + 20.3\sqrt{K}\sum_{t=\tau_1+1}^{\tau_{\msafe+1}} \sqrt{\xi\Big(\tau_{m(t)-1}-\tau_{m(t)-2}, \frac{\delta'}{(m(t))^2}\Big)}.
\end{aligned}
\end{equation}
Hence, we now only need to bound the cumulative regret up to time $t'$ when $t'>\tau_{\msafe+1}$. For compactness, let $m'$ denote the epoch $m(t')$. Since the status of the algorithm is safe at the end of round $t'$ and $t'\in\TsafeCheck$, we have that:
\begin{equation}
\label{eq:last-safe-time-step}
    \begin{aligned}
        &\sum_{t=1}^{t'} r_t(a_t) \geq t'\cdot l_{m'-1} - \tau_1 - \sqrt{2t'\ln\Big(\frac{(m'+\log_2(\tau_1))^3}{\delta'}\Big)} - 20.3\sqrt{K}\sum_{i=\tau_2}^{t'} \sqrt{\xi\Big(\tau_{m(i)-1}-\tau_{m(i)-2}, \frac{\delta'}{(m(i))^2}\Big)}.
    \end{aligned}
\end{equation}
Now note that when $t'>\tau_{\msafe+1}$, we have:
\begin{equation}
\label{eq:bound-cum-regret-t-prime-case-2}
    \begin{aligned}
        &\sum_{t=1}^{t'} (r_t(\pi^*(x_t)) - r_t(a_t))\\
        & \leq \sum_{t=1}^{t'} (R(\pi^*) - r_t(a_t)) + \sqrt{2t'\ln\bigg(\frac{\lceil m(t')+\log_2(\tau_1) \rceil^3}{\delta'}\bigg)} \\
        & \leq t'\cdot (20.3\sqrt{KB}+\sqrt{2B}) + \tau_1 + \sqrt{8t'\ln\bigg(\frac{\lceil m(t')+\log_2(\tau_1) \rceil^3}{\delta'}\bigg)}  \\
        & \;\;\;\; + 20.3\sqrt{K}\sum_{i=\tau_2}^{t'} \sqrt{\xi\Big(\tau_{m(i)-1}-\tau_{m(i)-2}, \frac{\delta'}{(m(i))^2}\Big)}.
    \end{aligned}
\end{equation}
Where the first inequality follows from the fact that $\eventAzuma$ holds. When $t'>\tau_{\msafe+1}$, note that $m'> \msafe+1$, hence \Cref{lem:bound-l} bounds $l_{m'-1}$. The last inequality follows from \eqref{eq:last-safe-time-step}, and \Cref{lem:bound-l}.  

To recap, we already argued that $t'\geq \tau_{\msafe+1}$. In \eqref{eq:bound-cum-regret-upto-tau-msafe-plus-one}, we also bounded cumulative regret up to time $\tau_{\msafe+1}$. Finally, \eqref{eq:bound-cum-regret-t-prime-case-2} bounds cumulative regret up to time $t'$ when $t'>\tau_{\msafe+1}$. Combining everything together, we get an unconditional bound on the cumulative regret up to time $t'$:
\begin{equation}
\label{eq:bound-cum-regret-t-prime}
    \begin{aligned}
        &\sum_{t=1}^{t'} (r_t(\pi^*(x_t)) - r_t(a_t))\\
        & \leq t'\cdot (20.3\sqrt{KB}+\sqrt{2B}) + \tau_1 + \sqrt{8t'\ln\bigg(\frac{\lceil m(t')+\log_2(\tau_1) \rceil^3}{\delta'}\bigg)}  \\
        & \;\;\;\; + 20.3\sqrt{K}\sum_{i=\tau_2}^{t'} \sqrt{\xi\Big(\tau_{m(i)-1}-\tau_{m(i)-2}, \frac{\delta'}{(m(i))^2}\Big)}.
    \end{aligned}
\end{equation}
Note that \eqref{eq:bound-cum-regret-t-prime} bounds the cumulative regret up to the last time-step where the conditions in $\IsSafe$ were verified to be true.
Also if $t'=T$, then \eqref{eq:bound-cum-regret-t-prime} gives us the required bound on the cumulative regret up to time $T$. Hence, moving forward, we only need to bound cumulative regret when $T > t'$.

Recall that we defined $t'$ to be the last time-step where the conditions in $\IsSafe$ were verified to be true. Let $t''$ be the next time-step where the $\IsSafe$ conditions would be checked. We now bound the cumulative regret up to time $t''$ in terms of the cumulative regret up to time $t'$.

Note that if $T\geq t''$, then $t''\in \TsafeCheck$. On the other hand if $T<t''$, it is easy to see that the cumulative regret up to time $T$ would be roughly smaller than the cumulative regret up to time $t''$ had the bandit run up to round $t''$. Since we are going to bound the cumulative regret up to time $t''$ in terms of the cumulative regret up to time $t'$ and $T>t'$, for the purposes of bounding cumulative regret up to time $T$, we can assume that the bandit runs up to time $t''$ and $t''\in\TsafeCheck$.   

Note that if $t''=t'+1$, we have:
\begin{equation}
\label{eq:bound-t-double-trivial}
    \sum_{t=1}^{t''} (r_t(\pi^*(x_t)) - r_t(a_t)) \leq 2 + \sum_{t=1}^{t'} (r_t(\pi^*(x_t)) - r_t(a_t)) 
\end{equation}

We now want to bound the cumulative regret up to time $t''$ in terms of the cumulative regret up to time $t'$ when $t''>t'+1$. Since both $t'$ and $t''$ are consecutive rounds in $\TsafeCheck$, when $t''> t'+1$, we have that both rounds lie in the same epoch. That is, $m(t'')=m(t')=m'$. When both rounds lie in the same epoch, since the status of the algorithm is safe at the end of round $t'$, we have that the action selection kernel $p_{m'}$ is used to pick actions at every time-step $t\in [\tau_{m'-1}+1,t'']$. Therefore, when $t''>t'+1$, we have:
\begin{equation}
\label{eq:bound-t-double}
    \begin{aligned}
        &\sum_{t=\tau_{m'-1}+1}^{t''} (r_t(\pi^*(x_t))-r_t(a_t))\\
        &\leq (t''-\tau_{m'-1})\sum_{\pi\in\Psi} Q_{m'}(\pi)\Reg(\pi) + \sqrt{2(t''-\tau_{m'-1})\ln\bigg(\frac{\lceil m'+\log_2(\tau_1)\rceil^3}{\delta'}\bigg)}\\
        &\leq 2(t'-\tau_{m'-1})\sum_{\pi\in\Psi} Q_{m'}(\pi)\Reg(\pi) + \sqrt{4(t'-\tau_{m'-1})\ln\bigg(\frac{\lceil m'+\log_2(\tau_1)\rceil^3}{\delta'}\bigg)}\\
        &\leq 2\sum_{t=\tau_{m'-1}+1}^{t'} (r_t(\pi^*(x_t))-r_t(a_t)) + 5 \sqrt{(t'-\tau_{m'-1})\ln\bigg(\frac{\lceil m'+\log_2(\tau_1)\rceil^3}{\delta'}\bigg)}.
    \end{aligned}
\end{equation}
The following arguments assume $t''>t'+1$. The first inequality follows from the fact that $t''\in\TsafeCheck$, the fact that $\eventEvalcheck$ holds, and the fact that the algorithm uses the action selection kernel $p_{m'}$ to pick actions at every time-step $t\in [\tau_{m'-1}+1,t'']$. The second inequality follows from the fact that $t',t''$ are consecutive rounds in $\TsafeCheck$, and are in the same epoch. The last inequality follows from the fact that $t'\in\TsafeCheck$, $\eventEvalcheck$ holds, and the fact that the algorithm uses the action selection kernel $p_{m'}$ to pick actions at every time-step $t\in [\tau_{m'-1}+1,t']$.

Therefore, when $t''>t'+1$, from \eqref{eq:bound-t-double} we have:
\begin{equation}
\label{eq:bound-t-double-nontrivial}
    \begin{aligned}
        &\sum_{t=1}^{t''} (r_t(\pi^*(x_t))-r_t(a_t)) \leq 3\sum_{t=1}^{t'} (r_t(\pi^*(x_t))-r_t(a_t)) + 5 \sqrt{T\ln\bigg(\frac{(m'+\log_2(\tau_1))^3}{\delta'}\bigg)}
    \end{aligned}
\end{equation}

To recap, we know that $t''>t'$. When $t''=t'+1$, \eqref{eq:bound-t-double-trivial} bounds the cumulative regret up to time $t''$ in terms of the cumulative regret up to time $t'$. When $t''>t'+1$, \eqref{eq:bound-t-double-nontrivial} bounds the cumulative regret up to time $t''$ in terms of the cumulative regret up to time $t'$. Combining everything together, we get the following unconditional bound on the cumulative regret up to time $t''$ in terms of the cumulative regret up to time $t'$:
\begin{equation}
\label{eq:bound-cumlative-regret-t-double}
    \begin{aligned}
        &\sum_{t=1}^{t''} (r_t(\pi^*(x_t))-r_t(a_t)) \leq 2 + 3\sum_{t=1}^{t'} (r_t(\pi^*(x_t))-r_t(a_t)) + 5 \sqrt{T\ln\bigg(\frac{(m'+\log_2(\tau_1))^3}{\delta'}\bigg)}
    \end{aligned}
\end{equation}

\textbf{Case 2.1} ($T> \tau_{\msafe+1}$ and $T<t''$):\\
Note that:
\begin{equation}
\label{eq:case-2-1}
    \begin{aligned}
        &\sum_{t=1}^T \Big( r_t(\pi^*(x_t)) - r_t(a_t) \Big)\\
        &\leq \sum_{t=1}^T\sum_{\pi\in\Psi}Q_t(\pi)\Reg(\pi) + \sqrt{8T\ln\bigg(\frac{\lceil m(T)+\log_2(\tau_1) \rceil^3}{\delta'}\bigg)}\\
        &\leq \sum_{t=1}^{t''}\sum_{\pi\in\Psi}Q_t(\pi)\Reg(\pi) + \sqrt{8T\ln\bigg(\frac{\lceil m(T)+\log_2(\tau_1) \rceil^3}{\delta'}\bigg)}\\
        &\leq \sum_{t=1}^{t''} \Big( r_t(\pi^*(x_t)) - r_t(a_t) \Big) + 7\sqrt{T\ln\bigg(\frac{\lceil 1+m(T)+\log_2(\tau_1) \rceil^3}{\delta'}\bigg)}.
    \end{aligned}
\end{equation}
Where the first inequality follows from the fact that $\eventAzuma$ holds. The second inequality follows from the fact that $t''>T$ and the fact that $\Reg(\pi)\geq 0$. The last inequality follows from $\eventAzuma$ and the fact that $t''\leq 2t' \leq 2T$.

Combining \eqref{eq:bound-cum-regret-t-prime}, \eqref{eq:bound-cumlative-regret-t-double}, and \eqref{eq:case-2-1} gives us the required bound on the cumulative regret for this case.

\textbf{Case 2.2} ($T> \tau_{\msafe+1}$ and $T \geq t''$):\\
From the definition of $t'$ and $t''$, the status of the algorithm switches to ``not safe'' at the end of round $t''$. Thereafter, all actions will be selected according to the action selection kernel $p_{\msafealg}$. Now note that:
\begin{equation}
\label{eq:case-2-2}
    \begin{aligned}
        &\sum_{t=t''+1}^T \Big( r_t(\pi^*(x_t)) - r_t(a_t) \Big)\\
        &\leq \sum_{t=t''+1}^T\sum_{\pi\in\Psi}Q_{\msafealg}(\pi)\Reg(\pi) + \sqrt{8(T-t'')\ln\bigg(\frac{\lceil m(t'')+\log_2(\tau_1) \rceil^3}{\delta'}\bigg)}\\
        &\leq (R(\pi^*)-l_{\msafealg})(T-t'') + \sqrt{8(T-t'')\ln\bigg(\frac{\lceil m(t'')+\log_2(\tau_1) \rceil^3}{\delta'}\bigg)}\\
        &\leq (20.3\sqrt{KB}+\sqrt{2B})(T-t'') + \sqrt{8(T-t'')\ln\bigg(\frac{\lceil m(t'')+\log_2(\tau_1) \rceil^3}{\delta'}\bigg)}.
    \end{aligned}
\end{equation}
Where the first inequality follows from $\eventAzuma$ and the fact that the kernel $p_{\msafealg}$ is used for all rounds $t\in [t''+1,T]$. The second inequality follows from $\eventEval$, which gives us that the expected reward of the randomized policy $Q_{\msafealg}$ is lower bounded by $l_{\msafealg}$. From \Cref{lem:bound-msafealg} we have that $\msafealg\geq \msafe+1$. Therefore the final inequality follows from \Cref{lem:bound-l}, which gives us a lower bound on $l_{\msafealg}$ since $\msafealg\geq \msafe+1$.

Combining \eqref{eq:bound-cum-regret-t-prime}, \eqref{eq:bound-cumlative-regret-t-double}, and \eqref{eq:case-2-2} gives us the required bound on the cumulative regret for this case.

\section{Proof for lower bound}
\label{app:proof-lower}

In this section, we prove \Cref{thm:lower-bound}, which we restate bellow for convenience.

\thmLbound*
\begin{proof}
Consider any $K\geq 2$ and $B\in[0,1/(2K)]$. We start by constructing a K arm stochastic contextual bandit instance. Let $\A=[K]$ be the set of arms, $\Xscript = (0,K)$ be the set of contexts, and $D$ denote the joint distribution of rewards and contexts. Such that $D_{\Xscript}$, the marginal distribution of $D$ on the set of contexts, is uniformly distributed over over the context set $\Xscript$. That is, $D_{\Xscript}\equiv \Unif(0,K)$. For all $x\in\Xscript$ and $a\in\A$, we let the conditional expected reward be given by,
\begin{equation}
    \label{eq:conditional-reward-lb}
    f^*(x,a):=
   \begin{cases}
    \alpha &\text{for } x\in (a-1,a],\\
    0 &\text{otherwise}.
   \end{cases}
\end{equation}
Where $\alpha$ is given by, 
\begin{equation}
    \label{eq:alpha}
    \alpha := \sqrt{\frac{K^2}{K-1}B}.
\end{equation}
Since $K\geq 2$ and $B\in [0,1/(2K)]$, we have that $\alpha\in [0,1]$. Hence the conditional expected reward for every context and action lies in $[0,1]^K$. We also let the rewards be noiseless. That is, 
\begin{equation}
    \Pr_{(x,r)\sim D}\big(r(a) = f^*(x,a)\big) = 1.
\end{equation}
This completes our description of the stochastic contextual bandit setup that we consider. We now let the model class $\F$ be given by,
\begin{equation}
    \label{eq:model-class-lb}
    \F = \{ f:\Xscript\times\A \rightarrow [0,1] \; | \; \forall a\in\A, \; \exists f_a\in [0,1] \text{ such that}, \; \forall x\in\Xscript, \; \text{ we have } f(x,a) = f_a  \}.
\end{equation}
That is, $\F$ is the class of all models that do not depend on contexts. Therefore, the set $\kernelSet$ contains all probability kernels that do not depend on contexts. 
\begin{equation}
    \label{eq:kernel-set-lb-case}
    \begin{aligned}
    \kernelSet=&\bigg\{p \text{ is a probability kernel} |\; \exists f_p\in\F, \; g_p:\A\times \R^{\A} \rightarrow [0,1], \; \forall \; (x,a)\in\Xscript \times \A, \; p(a|x) = g_p(a| f_p(x)) \bigg\}\\
    = & \bigg\{p \text{ is a probability kernel} |\; \exists \; g_p:\A \rightarrow [0,1], \; \forall \; (x,a)\in\Xscript \times \A, \; p(a|x) = g_p(a) \bigg\}.
    \end{aligned}
\end{equation}
That is, the set $\kernelSet$ simply reduces to the set of distributions over $\A$ Which also implies that $\kernelSet$ is convex. For notational convenience, we let $\G$ denote the set of all distributions over the action set $\A$. That is,
\begin{equation}
    \label{eq:action-distribution}
    \begin{aligned}
    \G = & \bigg\{g:\A \rightarrow [0,1] |\; \sum_{a\in\A} g(a) =1 \bigg\}.
    \end{aligned}
\end{equation}
Now note that for any arm $a\in\A$, from \eqref{eq:conditional-reward-lb}, we have that:
\begin{equation}
    \label{eq:avg-reward}
    \begin{aligned}
    \E_{x\sim D_{\Xscript}}[f^*(x,a)]=\frac{\alpha}{K}  \in[0,1].
    \end{aligned}
\end{equation}
Now note that, the average misspecification is given by:
\begin{equation}
    \begin{aligned}
    &\sqrt{\max_{p \in \kernelSet } \min_{f\in\F} \E_{x\sim D_{\Xscript}}\E_{a\sim p(\cdot|x)} [(f(x,a)-f^*(x,a))^2]}\\
    = & \sqrt{\max_{g \in \G } \sum_{a\in\A} g(a) \min_{f_a\in [0,1]} \E_{x\sim D_{\Xscript}}[(f_a-f^*(x,a))^2]}\\
    = & \sqrt{\max_{g \in \G } \sum_{a\in\A} g(a) \E_{x\sim D_{\Xscript}}[(f^*(x,a)-\alpha/K)^2]}\\
    = & \sqrt{\max_{g \in \G } \sum_{a\in\A} g(a) \frac{K-1}{K^2}\alpha} = \sqrt{B}.
    \end{aligned}
\end{equation}
Here the first equality follows from \eqref{eq:kernel-set-lb-case} and \eqref{eq:action-distribution}. The second equality follows from \eqref{eq:avg-reward} and the fact that the mean minimizes the mean squared error. The third equality follows from substituting the value for $f^*(x,a)$ from \eqref{eq:conditional-reward-lb}. Finally, the last equality follows from \eqref{eq:alpha} and the fact that $g\in\G$.

It is easy to see that the optimal policy is given by $\pi^*$ (defined in \eqref{eq:opt-policy-lb}). Further, the expected reward of $\pi^*$ is $\alpha$. That is, $R(\pi^*)=\alpha$. 
\begin{equation}
    \label{eq:opt-policy-lb}
    \text{For all $a\in\A$ and $x\in(a-1,a]$, } \pi^*(x) := a. 
\end{equation}

Now consider any arm $a\in\A$. With some abuse of notation, let $a$ also denote the policy that selects arm~$a$ for all contexts. Note that the expected reward for this policy is $\alpha/K$, that is $R(a)=\alpha/K$. For any probability kernel $p$ in $\kernelSet$, since it does not depend on contexts, the randomized policy $Q_p$ is only supported by policies in $\A$. Therefore, we have that the expected regret of any randomized policy $Q_p$ that is induced by some probability kernel in $\kernelSet$ is given by:
\begin{equation}
    \begin{aligned}
    &\sum_{\pi\in\Psi} Q_p(\pi)\Reg(\pi) =  \sum_{a\in\A} Q_p(a) \Reg(a) = \sum_{a\in\A} Q_p(a) \big( R(\pi^*) - R(a) \big) = \sum_{a\in\A} Q_p(a) \Big( 1-\frac{1}{K} \Big) \alpha\\
    & = \Big( 1-\frac{1}{K} \Big) \alpha = \sqrt{(K-1)B} \geq \sqrt{KB/2}.
    \end{aligned}
\end{equation}
This completes the proof of \Cref{thm:lower-bound}.
\end{proof}

\section{Detailed introduction example}
\label{sec:introduction_example}

To generate both \Cref{fig:falcon_failure} and \Cref{fig:safe-falcon}, we implement \texttt{FALCON+} and $\SafeFalcon$ respectively. Both implementations require knowledge of the estimation rate function ($\xi$). In this section, we detail our choice of estimation rates and give some more intuition for the oscillatory regret behavior we see in that \Cref{fig:falcon_failure}. 


\paragraph{FALCON+ Setup} As explained in the introduction, we use the \texttt{FALCON+} algorithm in \cite{simchi2020bypassing}. This algorithm requires knowledge of a function $\xi_{\F, \delta}(n)$ representing the estimation rate. This is defined in the following assumption.

\textit{Assumption 2 in \cite{simchi2020bypassing}} 
Given $n$ data samples $\{(x_1, a_1, r_1(a_1)), \cdots, (x_n, a_n, r_n(a_n)) \}$ generated \textit{iid} from an arbitrary distribution $\mathcal{D}_{\text{data}}$, the offline regression oracle return a function $\hat{f}$. For all $\delta > 0$, with probability at least $1 - \delta$, we have
\begin{equation}
    \label{eq:falcon_xi}
    \E[(\hat{f}(x, a) - f^*(x,a))^2] 
    \leq \xi_{\F, \delta}(n)
\end{equation}

We set the class of functions $\F$ to be the set of linear functions (i.e., linear regressions of outcomes on contexts and an intercept). For this model, it is straightforward to show analytically that the random variable on the left-hand side of \eqref{eq:falcon_xi} is distributed as a random variable $\frac{1}{n} \chi_2^2$, where $\chi_2^2$ is a random variable distributed as chi-squared with two degrees of freedom. Therefore, we set $\xi_{\F, \delta}(n)$ to be the $1-\delta$-quantile of the distribution of $\frac{1}{n} \chi_2^2$. Note that this quantity is decreasing with $n$. 

\paragraph{Explaining results} In Figure \ref{fig:falcon_failure}, we saw that average per-epoch regret oscillates between very low and very high levels. Let's explain this phenomenon.

First, note that the optimal treatment assignment policy for this setting is 
\begin{equation}
    \pi^*(x) = 
    \begin{cases}
        1 \qquad \text{if }x \leq .5, \\
        2 \qquad \text{otherwise}.
    \end{cases}
\end{equation}

Moreover, if arms were assigned uniformly at random, the the best linear approximation to $f^*(x, a)$ would be given by
\begin{align}
    \hat{f}^*(x, a) := 
    \begin{cases}
        -.25 + 1.5x \qquad \text{if } a = 1, \\
        .5 \qquad \text{if } a = 2. \\
    \end{cases}
\end{align}

Finally, note that the data-generating process was selected so that the policy $\pi_{\hat{f}^*}$ induced by the best linear approximation coincides with $\pi^*(x)$. 

With this in mind, we are ready to understand the oscillatory behavior in Figure \ref{fig:oscillation}. During the first seven or so epochs, arms are assigned roughly at random, and the linear outcome models fit on this data are not too different from $\hat{f}^*(x, a)$, but the exploitation parameter $\gamma_m \propto \xi_{\F, \delta}(n)^{-1}$ is small, so the induced action selection kernel $p_{m}(a|x)$ does not concentrate very much.

However, since epochs have increasing size, for later epochs $\gamma_m \propto \xi_{\F, \delta}(n)^{-1}$ can be large, and $p_m(a|x)$ concentrates and becomes approximately $\pi_{\hat{f}^*}$. This is what happens in Epoch 8 in Figure \ref{fig:falcon_failure}, for example. However, this skews the data distribution, so that in Epoch 9 when we fit an outcome model using data from Epoch 8 we get something that is very different from $\hat{f}^*(x, a)$. In turn, this accidentally increases the amount of exploration happening in this epoch. So in Epoch 10 we are able to return to an outcome model that is similar to  $\hat{f}^*(x, a)$. But by then the exploitation parameter $\gamma_m$ is even larger, so $p_m(a|x)$ concentrates again, causing the cycle to repeat. 

\begin{figure}[H]
    \centering
    \includegraphics[width=1\textwidth]{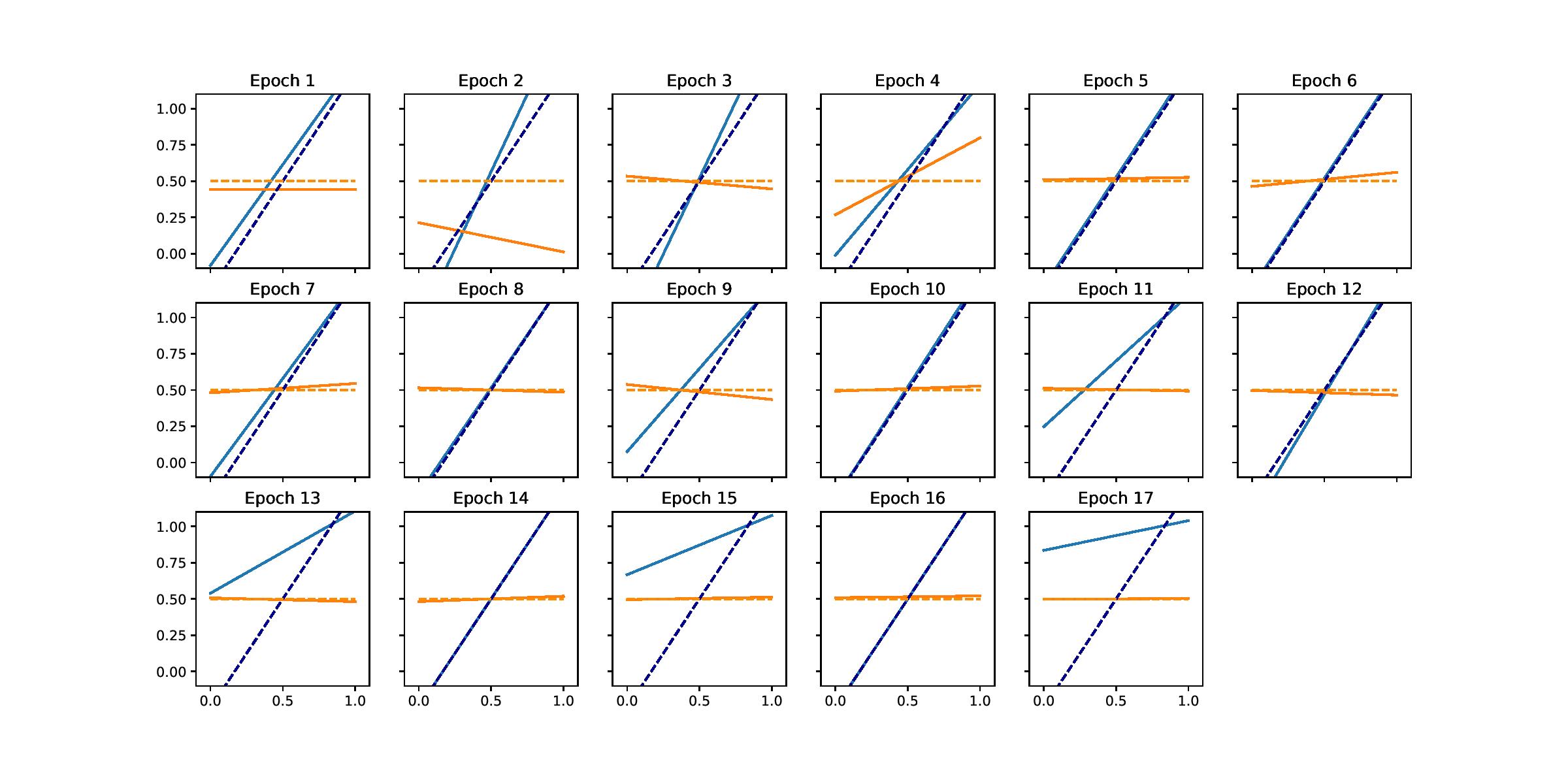}
    \caption{Linear models at the end of each epoch $m$ and action $a=1$ (blue) and $a=2$ (orange). Solid lines are fitted models $\hat{f}(\cdot, a)$. Dashed lines are oracle best linear approximation under uniform action sampling.}
    \label{fig:oscillation}
\end{figure}

The takeaway from this example is that in the presence of misspecification we must curb the amount of exploitation. This insight underpinned the construction of the algorithm presented here.

\end{document}